\RequirePackage{etex}
\documentclass{article} 
\usepackage{iclr2023_conference,times}




\usepackage[utf8]{inputenc} 
\usepackage[T1]{fontenc}    
\usepackage{hyperref}       
\usepackage{url}            
\usepackage{booktabs}       
\usepackage{amsfonts}       
\usepackage{nicefrac}       
\usepackage{microtype}      
\usepackage{xcolor}         

\usepackage{microtype}
\usepackage{graphicx}
\usepackage{subfigure}
\usepackage{booktabs} 

\usepackage{hyperref}


\usepackage{amsmath}
\usepackage{amssymb}
\usepackage{mathtools}
\usepackage{amsthm}

\usepackage[capitalize,noabbrev]{cleveref}

\usepackage{autonum}
\usepackage{algorithm}
\usepackage{wrapfig}
\usepackage{comment}
\usepackage{enumitem}
\usepackage[ruled,vlined,algo2e]{algorithm2e}

\newcommand{\eg}{{\it e.g.}, }
\newcommand{\ie}{{\it i.e.}, }
\newcommand\op[1]{\operatorname{#1}}

\usepackage{amsmath,amsfonts,bm}


\newcommand{\wrt}{{\textit{w.r.t.}}}








\def\eqref#1{(\ref{#1})}









\def\1{\bm{1}}

\newcommand{\set}[1]{\left\{#1\right\}}
\def\eps{{\varepsilon}}
\newcommand{\norm}[1]{\left\lVert#1\right\rVert}


\def\rd{{\textnormal{d}}}


\def\rx{{\textnormal{x}}}
\def\ry{{\textnormal{y}}}
\def\rz{{\textnormal{z}}}


\def\rvu{{\mathbf{i}}}

\def\rvu{{\mathbf{u}}}

\def\rvx{{\mathbf{x}}}
\def\rvy{{\mathbf{y}}}


\def\ervd{{\textnormal{d}}}

\def\ervx{{\textnormal{x}}}
\def\ervy{{\textnormal{y}}}
\def\ervz{{\textnormal{z}}}




\def\vw{{\bm{w}}}
\def\vx{{\bm{x}}}



\def\mC{{\bm{C}}}

\DeclareMathAlphabet{\mathsfit}{\encodingdefault}{\sfdefault}{m}{sl}
\SetMathAlphabet{\mathsfit}{bold}{\encodingdefault}{\sfdefault}{bx}{n}


\def\gD{{\mathcal{D}}}

\def\gF{{\mathcal{F}}}

\def\gN{{\mathcal{N}}}

\def\gP{{\mathcal{P}}}

\def\gX{{\mathcal{X}}}
\def\gY{{\mathcal{Y}}}
\def\gZ{{\mathcal{Z}}}



\def\sP{{\mathbb{P}}}

\def\sR{{\mathbb{R}}}








\newcommand{\E}{\mathbb{E}}




\def\haty{{\hat{\ervy}}}
\def\ix{{\rvx_{\text{I}}}}
\def\imx{{\rvx_{\text{IM}}}}
\def\mx{{\rvx_{\text{M}}}}
\def\bfw{{\boldsymbol{w}}}
\newcommand*\samethanks[1][\value{footnote}]{\footnotemark[#1]}
\theoremstyle{plain}
\newtheorem{theorem}{Theorem}[section]
\newtheorem{proposition}[theorem]{Proposition}
\newtheorem{lemma}[theorem]{Lemma}
\newtheorem{corollary}[theorem]{Corollary}
\theoremstyle{definition}
\newtheorem{definition}[theorem]{Definition}
\newtheorem{assumption}[theorem]{Assumption}
\theoremstyle{remark}


\title{Equal Improvability: A New Fairness Notion Considering the Long-term Impact}

%

\author{
  Ozgur Guldogan\thanks{Equal Contribution. Emails: \url{ozgurguldogan@ucsb.edu},~\url{yzeng58@wisc.edu}.
  }{$^{\ \ }$}\footnotemark[3],\ \
  Yuchen Zeng\samethanks{$^{\ \ }$}\footnotemark[4],\ \
  Jy-yong Sohn\thanks{Work done at the University of Wisconsin-Madison.} \ \footnotemark[5]\ ,\ \
  Ramtin Pedarsani\footnotemark[3],\ \
  Kangwook Lee\footnotemark[4] \\
  \small{ \footnotemark[3] \ \ University of California, Santa Barbara \ \  \footnotemark[5] \ \ Yonsei University  \footnotemark[4] \ \ University of Wisconsin-Madison}
}

\iclrfinalcopy 
\begin{document}
\maketitle

\begin{abstract}
Devising a fair classifier that does not discriminate against different groups is an important problem in machine learning. Recently, effort-based fairness notions are getting attention, which considers the scenarios of each individual making effort to improve its feature over time. Such scenarios happen in the real world, e.g., college admission and credit loaning, where each rejected sample makes effort to change its features to get accepted afterward. In this paper, we propose a new effort-based fairness notion called Equal Improvability (EI), which equalizes the potential acceptance rate of the rejected samples across different groups assuming a bounded level of effort will be spent by each rejected sample. We also propose and study three different approaches for finding a classifier that satisfies the EI requirement. Through experiments on both synthetic and real datasets, we demonstrate that the proposed EI-regularized algorithms encourage us to find a fair classifier in terms of EI. Additionally, we ran experiments on dynamic scenarios which highlight the advantages of our EI metric in equalizing the distribution of features across different groups, after the rejected samples make some effort to improve. Finally, we provide mathematical analyses of several aspects of EI: the relationship between EI and existing fairness notions, and the effect of EI in dynamic scenarios. 
Codes are available in a GitHub repository~\footnote{\href{https://github.com/guldoganozgur/ei_fairness}{https://github.com/guldoganozgur/ei\_fairness}}.
\end{abstract}

\vspace{-.2in}
\section{Introduction}\label{sec:intro}
\vspace{-2mm}
Over the past decade, machine learning has been used in a wide variety of applications.
However, these machine learning approaches are observed to be unfair to individuals having different ethnicity, race, and gender.
As the implicit bias in artificial intelligence tools raised concerns over potential discrimination and equity issues, various researchers suggested defining fairness notions and developing classifiers that achieve fairness. 
One popular fairness notion is \emph{demographic parity} (DP), which requires the decision-making system to provide output such that the groups are equally likely to be assigned to the desired prediction classes, \eg acceptance in the admission procedure.
DP and related fairness notions are largely employed to mitigate the bias in many realistic problems such as recruitment, credit lending, and university admissions ~\citep{zafar2019preferece,Hardt2016Equality,dwork2012individual,Zafar2017Mistreatment}.

However, most of the existing fairness notions only focus on immediate fairness, without taking potential follow-up inequity risk into consideration. 
In Fig.~\ref{fig:demo}, we provide an example scenario when using DP fairness has a long-term fairness issue, in a simple loan approval problem setting.
Consider two groups (group 0 and group 1) with different distributions, where each individual has one label (approve the loan or not) and two features (credit score, income) that can be improved over time.
Suppose each group consists of two clusters (with three samples each), and the distance between the clusters is different for two groups. 
Fig.~\ref{fig:demo} visualizes the distributions of two groups and the decision boundary of a classifier $f$ which achieves DP among the groups.
We observe that the rejected samples (left-hand-side of the decision boundary) in group 1 are located further away from the decision boundary than the rejected samples in group 0.
As a result, the rejected applicants in group 1 need to make more effort to cross the decision boundary and get approval. 
This improvability gap between the two groups can make the rejected applicants in group 1 less motivated to improve their features, which may increase the gap between different groups in the future. 

\begin{wrapfigure}{r}{0.53\textwidth}
    \centering
    \vspace{-0.3in}
     \includegraphics[width = 0.4\textwidth]{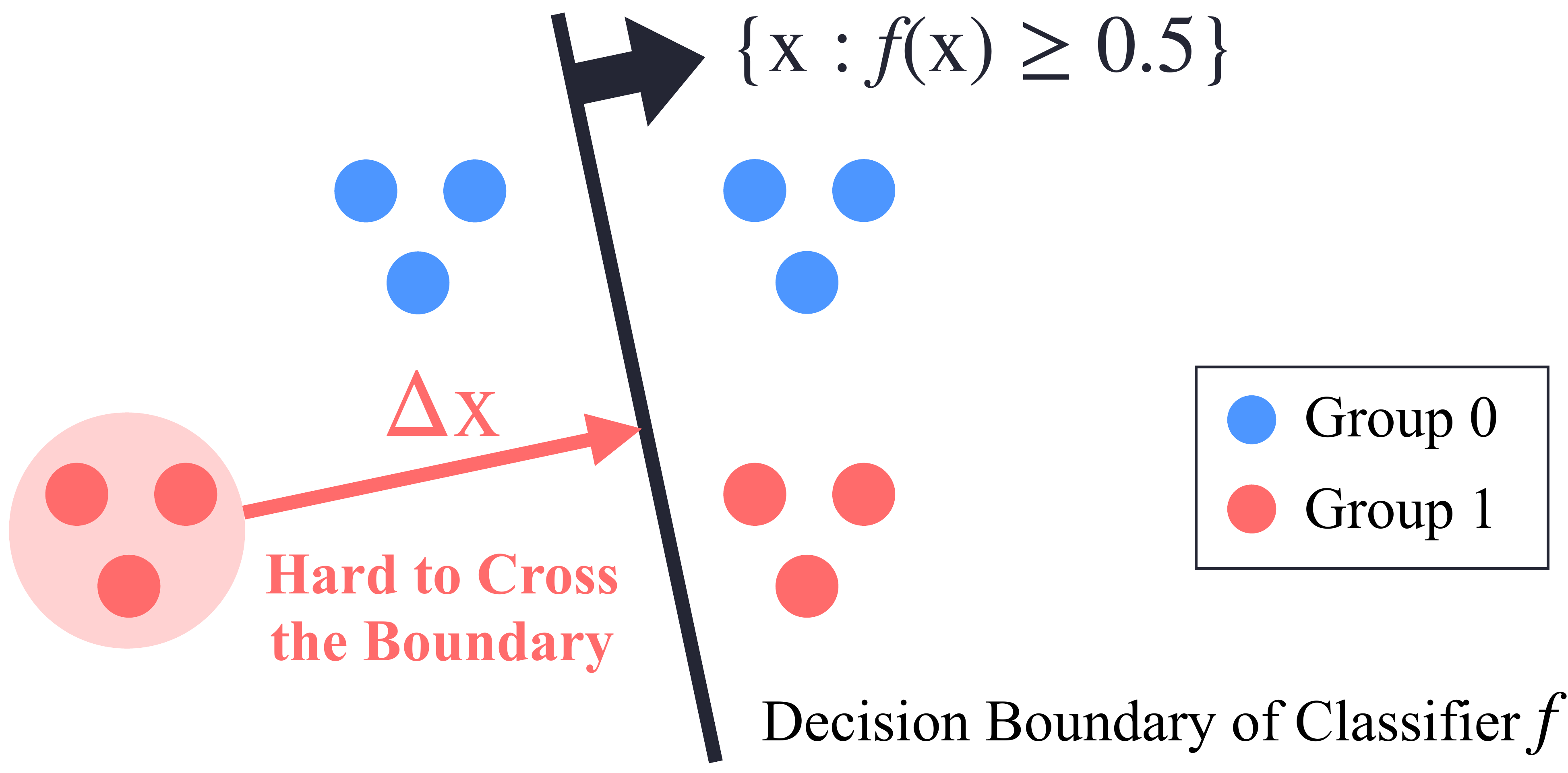}
    \vspace{-0.1in}
    \caption{
    \textbf{
    Toy example showing the insufficiency of fairness notion that does not consider improvability.
    }
    We consider the binary classification (accept/reject) on 12 samples (dots), where $\rvx$ is the feature of the sample and the color of the dot represents the group.
    The given classifier $f$ is fair in terms of a popular notion called demographic parity (DP), but does not have equal improvability of rejected samples ($f(\rvx) < 0.5$) in two groups;  
    the rejected samples in group 1 needs more effort $\Delta \rvx$ to be accepted, \ie $f(\rvx + \Delta \rvx) \ge 0.5$, compared with the rejected samples in group 0. 
    }
    \vspace{-0.15in}
    \label{fig:demo}
\end{wrapfigure}

This motivated the advent of fairness notions that consider dynamic scenarios when each rejected sample makes effort to improve its feature, and measure the group fairness after such effort is made~\citet{gupta2019equalizing,heidari2019longterm,von2022fairness}.
However, as shown in Table~\ref{tab:comparison}, they have various limitations \eg vulnerable to imbalanced group negative rates or outliers. 

In this paper, we introduce another fairness notion designed for dynamic scenarios, dubbed as \emph{Equal Improvability} (EI), which does not suffer from these limitations. 
Let $\rvx$ be the feature of a sample and $f$ be a score-based classifier, \eg predicting a sample as accepted if $f(\rvx) \ge 0.5$ holds and as rejected otherwise. 
We assume each rejected individual wants to get accepted in the future, thus improving its feature within a certain effort budget towards the direction that maximizes its score $f(\rvx)$. 
Under this setting, we define EI fairness as the equity of the potential acceptance rate of the different rejected groups, once each individual makes the best effort within the predefined budget. 
This prevents the risk of exacerbating the gap between different groups in the long run.

Our key contributions are as follows:
\begin{itemize}
\vspace{-2mm}
    \item We propose a new group fairness notion called \emph{Equal Improvability} (EI), which aims to equalize the probability of rejected samples being qualified after a certain amount of feature improvement, for different groups. 
    EI encourages rejected individuals in different groups to have an equal amount of motivation to improve their feature to get accepted in the future. 
    We analyze the properties of EI and the connections of EI with other existing fairness notions.

    \item We provide three methods to find a classifier that is fair in terms of EI, each of which uses a unique way of measuring the inequity in the improvability.
    Each method is solving a min-max problem where the inner maximization problem is finding the best effort to measure the EI unfairness, and the outer minimization problem is finding the classifier that has the smallest fairness-regularized loss.
    Experiments on synthetic/real datasets demonstrate that our algorithms find classifiers having low EI unfairness. 
    
    \item We run experiments on dynamic scenarios where the data and the classifier evolve over multiple rounds, and show that training a classifier with EI constraints is beneficial for making the feature distributions of different groups identical in the long run.
\end{itemize}

\vspace{-.1in}
\vspace{-2mm}
\section{Equal Improvability}\label{sec:prob_set}
\vspace{-2mm}
Before defining our new fairness notion called \emph{Equal Improvability} (EI), we first introduce necessary notations.
For an integer $n$, let $[n] = \set{0,\dots, n-1}$.
We consider a binary classification setting where each data sample has an input feature vector $\rvx \in \gX \subseteq \sR^d$ and a label $\ervy \in \gY = \set{0,1}$. 
In particular, we have a sensitive attribute $\ervz \in \gZ = [Z]$, where $Z$ is the number of sensitive groups. 
As suggested by~\citet{chen2021effort}, we sort $d$ features $\rvx \in \sR^{d}$ into three categories: improvable features $\ix \in \sR^{d_{\mathrm{I}}}$, manipulable features $\mx\in \sR^{d_{\mathrm{M}}}$, and immutable features $\imx\in \sR^{d_{\mathrm{IM}}}$, where $d_{\mathrm{I}} + d_{\mathrm{M}} + d_{\mathrm{IM}}= d$ holds. 
Here, improvable features $\ix$ refer to the features that can be improved and can directly affect the outcome, \eg salary in the credit lending problem, and GPA in the school's admission problem. 
In contrast, manipulable features $\mx$ can be altered, but are not directly related to the outcome, \eg marital status in the admission problem, and communication type in the credit lending problem.
Although individuals may manipulate these manipulable features to get the desired outcome, we do not consider it as a way to make efforts as it does not affect the individual's true qualification status. 
Immutable features $\imx$ are features that cannot be altered, such as race, age, or date of birth.
Note that if sensitive attribute $\ervz$ is included in the feature vector, then it belongs to immutable features.
For ease of notation, we write $\rvx = (\ix, \mx, \imx)$.
Let $\gF = \set{f: \gX \to [0,1]}$ be the set of classifiers, where each classifier is parameterized by $\bfw$, \ie $f = f_{\bfw}$. 
Given $f\in\gF$, we consider the following deterministic prediction: $\haty \mid \rvx = \mathbf{1}\{f(\rvx) \geq 0.5\}$ where $\mathbf{1}\{A\} = 1 $ if condition A holds, and $\mathbf{1}\{A\} = 0 $ otherwise.
We now introduce our new fairness notion. 
\begin{definition}[Equal Improvability]\label{def:EI}
Define a norm $\mu: \sR^{d_{\mathrm{I}}} \to [0, \infty)$.
For a given constant $\delta > 0$, a classifier $f$ is said to achieve \emph{equal improvability with $\delta$-effort} if 
\begin{align}
    &~\sP\left(\max_{\mu(\Delta \ix) \leq \delta} f(\rvx + \Delta \rvx)\hspace{-0.5mm} \ge \hspace{-0.5mm} 0.5 \mid f(\rvx)  \hspace{-0.5mm} <  \hspace{-0.5mm} 0.5, \ervz = z \hspace{-0.5mm} \right) \hspace{-0.5mm} = \hspace{-0.5mm} \sP\left(\max_{\mu(\Delta \ix) \leq \delta} f(\rvx + \Delta \rvx) \hspace{-0.5mm} \ge  \hspace{-0.5mm} 0.5 \mid f(\rvx) \hspace{-0.5mm} < \hspace{-0.5mm} 0.5 \hspace{-0.5mm} \right)
\end{align}
holds for all $z \in \gZ$,
where $\Delta \ix$ is the effort for improvable features and $\Delta \rvx = (\Delta \ix, \mathbf{0}, \mathbf{0})$.
\end{definition}

\vspace{-2mm}
Note that the condition $f(\rvx) < 0.5$ represents that an individual is unqualified, and $f(\rvx + \Delta \rvx) \ge 0.5$ implies that the effort $\Delta \rvx$ allows the individual to become qualified.
The above definition of fairness in \emph{equal improvability} requires that unqualified individuals from different groups $z \in [Z]$ are equally likely to become qualified if appropriate effort is made.
Note that $\mu$ can be defined on a case-by-case basis. 
For example, we can use $\norm{\ix} = \sqrt{\ix^\top \mC \ix}$, where $\mC \in \sR^{d_{\mathrm{I}}\times d_{\mathrm{I}}}$ is a cost matrix that is diagonal and positive definite. 
Here, the diagonal terms of $\mC$ characterize how difficult to improve each feature. 
For instance, consider the graduate school admission problem where $\ix$ contains features ``number of publications'' and ``GPA''.
Since publishing more papers is harder than raising the GPA, the corresponding diagonal term for the number of publications feature in $\mC$ should be greater than that for the GPA feature. 
The constant $\delta$ in Definition~\ref{def:EI} can be selected depending on the classification task and the features. Appendix~\ref{sec:EI_def_detail} contains the interpretation of each term in Def.~\ref{def:EI}.
We introduce an equivalent definition of EI fairness below.

\begin{proposition}\label{prop:EI_alt}
The EI fairness notion defined in Def.~\ref{def:EI} has an equivalent format: a classifier $f$ achieves equal improvability with $\delta$-effort if and only if 
\begin{align}
\sP(\rvx \in \gX_{-}^{\op{imp}} \mid \rvx \in \gX_{-}, \ervz = z ) = \sP(\rvx \in \gX_{-}^{\op{imp}} \mid \rvx \in \gX_{-})
\end{align}
holds for all $z \in \gZ$ where $\gX_{-} = \{ \rvx: f(\rvx) < 0.5 \}$ is the set of features $\rvx$ for unqualified samples, and $\gX_{-}^{\op{imp}} = \{ \rvx: f(\rvx) < 0.5, \max_{\mu(\Delta \rvx_I) \le \delta} f(\rvx + \Delta \rvx) \ge 0.5 \}$ is the set of features $\rvx$ for unqualified samples that can be improved to qualified samples by adding $\Delta \rvx$ satisfying $\mu(\Delta \rvx_I) \le \delta$.
\end{proposition}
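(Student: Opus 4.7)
The plan is to show that the two conditional probability expressions in Definition~\ref{def:EI} and in Proposition~\ref{prop:EI_alt} are in fact equal event-by-event, so the equivalence is just an unpacking of notation together with a single application of the definition of conditional probability.

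First I would identify the relevant events. The event $\{f(\rvx) < 0.5\}$ is precisely $\{\rvx \in \gX_{-}\}$ by the definition of $\gX_{-}$. Next, the event
\[
E := \Bigl\{ \max_{\mu(\Delta \ix) \le \delta} f(\rvx + \Delta \rvx) \ge 0.5 \Bigr\} \cap \{f(\rvx) < 0.5\}
\]
coincides exactly with $\{\rvx \in \gX_{-}^{\op{imp}}\}$, since the set $\gX_{-}^{\op{imp}}$ is defined as the intersection of $\gX_{-}$ with the set of features admitting a feasible improvement to the qualified region. Here I would also note that $\Delta \rvx = (\Delta \ix, \mathbf{0}, \mathbf{0})$ so the outer maximization in Def.~\ref{def:EI} and the one in the definition of $\gX_{-}^{\op{imp}}$ are literally the same.

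Then the key computational step is to observe that for the conditional probability in Def.~\ref{def:EI},
\[
\sP\!\left(\max_{\mu(\Delta \ix) \le \delta} f(\rvx + \Delta \rvx) \ge 0.5 \,\Big|\, f(\rvx) < 0.5, \ervz=z\right) = \frac{\sP(E, \ervz=z)}{\sP(\rvx \in \gX_{-}, \ervz=z)} = \sP(\rvx \in \gX_{-}^{\op{imp}} \mid \rvx \in \gX_{-}, \ervz=z),
\]
where the first equality uses the definition of conditional probability together with the fact that $E \subseteq \{f(\rvx) < 0.5\}$, and the second equality substitutes $E = \{\rvx \in \gX_{-}^{\op{imp}}\}$. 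The same manipulation applied without conditioning on $\ervz$ rewrites the right-hand side of Def.~\ref{def:EI} as $\sP(\rvx \in \gX_{-}^{\op{imp}} \mid \rvx \in \gX_{-})$. Hence equality of the two sides in Def.~\ref{def:EI} for all $z \in \gZ$ is equivalent to equality of the two sides in Proposition~\ref{prop:EI_alt} for all $z \in \gZ$, giving both directions simultaneously.

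I do not expect any real obstacle: this proposition is essentially a definitional restatement, and the only subtlety is being careful that the event $\{\max_{\mu(\Delta \ix) \le \delta} f(\rvx + \Delta \rvx) \ge 0.5\}$ is being implicitly intersected with $\{f(\rvx) < 0.5\}$ on the left conditional in Def.~\ref{def:EI}, which is automatic once we condition on $f(\rvx) < 0.5$. I would therefore keep the write-up short, consisting of the event identifications above, the conditional-probability manipulation, and a one-line conclusion that the two forms are equivalent.
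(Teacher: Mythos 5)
Your proposal is correct and matches the paper, which simply notes that the proposition follows trivially from the definitions of $\gX_{-}$ and $\gX_{-}^{\op{imp}}$; your write-up just makes the event identifications and the conditional-probability manipulation explicit. No gap here.
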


\vspace{-2mm}
The proof of this proposition is trivial from the definition of $\gX_{-}$ and $\gX_{-}^{\op{imp}}$.
Note that $\sP(\rvx \in \gX_{-}^{\op{imp}} \mid \rvx \in \gX_{-})$ in the above equation indicates the probability that unqualified samples can be improved to qualified samples by changing the features within budget $\delta$.
This is how we define the ``improvability'' of unqualified samples, and the EI fairness notion is equalizing this improvability for all groups.
\vspace{-.1in}
\paragraph{Visualization of EI.}

\begin{figure}[t]
\vspace{-.5in}
    \centering
    \includegraphics[width = 0.85 \textwidth]{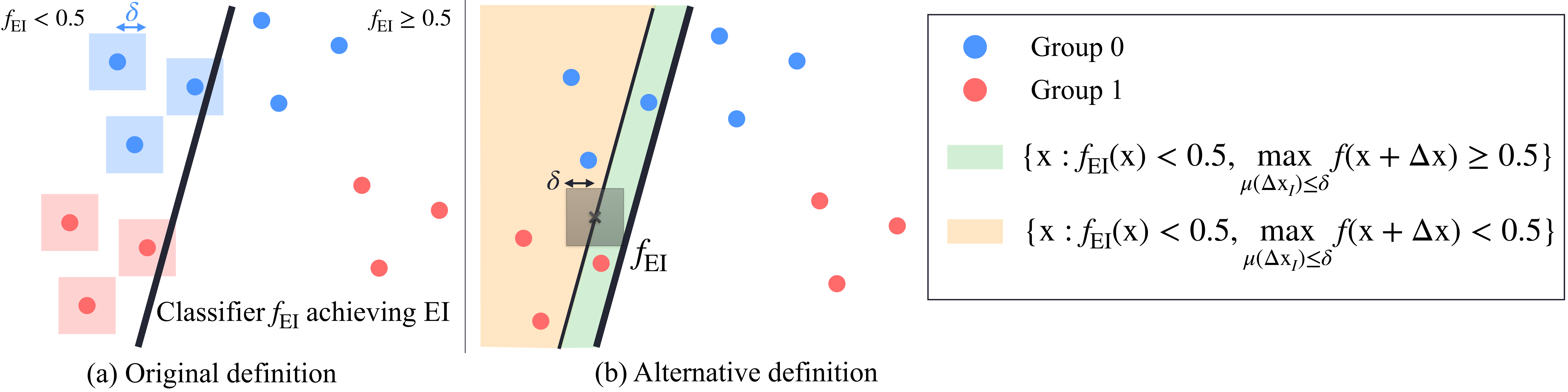}
    \vspace{-.15in}
    \caption{
    \textbf{Visualization of EI fairness.} For binary classification on 12 samples (dots) in two groups (red/blue), we visualize the fairness notion defined in this paper:
    (a) shows the original definition in Def.~\ref{def:EI}, and 
    (b) shows an equivalent definition in Prop.~\ref{prop:EI_alt}. Here we assume two-dimensional features (both are improvable) and $L_{\infty}$ norm for $\mu(\rvx) = \lVert \rvx \rVert_{\infty}$.
    The classifier $f_{\op{EI}}$ achieves equal improvability (EI) since the same portion (1 out of 3) of unqualified samples in each group can be improved to qualified samples. 
    }
    \vspace{-0.2in}
    \label{fig:visualize_EI}
\end{figure}

Fig.~\ref{fig:visualize_EI} shows the geometric interpretation of EI fairness notion in Def.~\ref{def:EI} and Prop.~\ref{prop:EI_alt}, for a simple two-dimensional dataset having 12 samples in two groups $z \in \{\text{red}, \text{blue}\}$.
Consider a linear classifier $f_{\op{EI}}$ shown in the figure, where the samples at the right-hand-side of the decision boundary is classified as qualified samples ($f_{\op{EI}} (\rvx) \ge 0.5$).
In Fig.~\ref{fig:visualize_EI}a, we have $L_{\infty}$ ball at each unqualified sample, representing that these samples have a chance to improve their feature in a way that the improved feature $\rvx + \Delta \rvx$ allows the sample to be classified as qualified, \ie $f_{\op{EI}} (\rvx + \Delta \rvx) \ge 0.5$. 
One can confirm that $\sP\left(\max_{\mu(\Delta \ix) \leq \delta} f(\rvx + \Delta \rvx) \ge 0.5 \mid f(\rvx)   <   0.5, \ervz = z \right) = \frac{1}{3}$ holds for each group $z \in \{\text{red}, \text{blue}\}$, thus satisfying equal improvability according to Def.~\ref{def:EI}.
In Fig.~\ref{fig:visualize_EI}b, we check this in an alternative way by using the EI fairness definition in Prop.~\ref{prop:EI_alt}. 
Here, instead of making a set of improved features at each sample, we partition the feature domain $\gX$ into three parts: (i) the features for qualified samples $\gX_{+} = \{ \rvx: f_{\op{EI}}(\rvx) \ge 0.5 \}$, (ii) the features for unqualified samples that can be improved $\gX_{-}^{\op{imp}} = \{ \rvx: f_{\op{EI}}(\rvx) < 0.5, \max_{\mu(\Delta \ix) \leq \delta} f(\rvx + \Delta \rvx) \ge 0.5 \}$, and (iii) the features for unqualified samples that cannot be improved $\{ \rvx: f_{\op{EI}}(\rvx) < 0.5, \max_{\mu(\Delta \ix) \leq \delta} f(\rvx + \Delta \rvx) < 0.5 \}$. 
In the figure, (ii) is represented as the green region and (iii) is shown as the yellow region. 
From Prop.~\ref{prop:EI_alt}, EI fairness means that $\frac{\text{\# samples in (ii)}}{\text{ \# samples in (ii) + \# samples in (iii)}}$ is identical at each group $z \in \{\text{red}, \text{blue}\}$, which is true for the example in Fig.~\ref{fig:visualize_EI}b.
\vspace{-.1in}
\paragraph{Comparison of EI with other fairness notions.}
The suggested fairness notion \emph{equal improvability} (EI) is in stark difference with existing popular fairness notions that do not consider dynamics, \ie demographic parity (DP), equal opportunity (EO)~\citep{Hardt2016Equality}, and equalized odds (EOD)~\citep{Hardt2016Equality}, which can be ``myopic'' and focuses only on achieving classification fairness in the current status.
Our notion instead, aims at achieving classification fairness in the long run when each sample improves its feature over time. 

On the other hand, EI also has differences with existing fairness notions that capture the dynamics of samples~\citep{heidari2019longterm,huang2019effort,gupta2019equalizing, von2022fairness}.
Table~\ref{tab:comparison} compares our fairness notion with the related existing notions.
In particular, Bounded Effort (BE) fairness proposed by \citet{heidari2019longterm} equalizes ``the available reward after each individual making a bounded effort'' for different groups, which is very similar to EI when we set a proper reward function (see Appendix~\ref{app:specification_be}). 
To be more specific, the BE fairness can be represented as in Table~\ref{tab:comparison}.
Comparing this BE expression with EI in Definition~\ref{def:EI}, one can confirm the difference: the inequality $f(\rvx) < 0.5$ is located at the conditional part for EI, which is not true for BE.
EI and BE are identical if the negative prediction rates are equal across the groups, but in general, they are different.  
The condition $f(\mathbf{x}) < 0.5$  here is very important since only looking into the unqualified members makes more sense when we consider \emph{improvability}. 
More importantly, the BE definition is based on reward functions and we are presenting BE in a form that is closest to our EI fairness expression. 
Besides, Equal Recourse (ER) fairness proposed by \citet{gupta2019equalizing} suggests equalizing the average effort of different groups without limiting the amount of effort that each sample can make.
Note that ER is vulnerable to outliers. 
For example, when we have an unqualified outlier sample that is located far way from the decision boundary, ER disparity will be dominated by this outlier and fail to reflect the overall unfairness.
\citet{von2022fairness} suggested Individual-Level Fair Causal Recourse (ILFCR), a fairness notion that considers a more general setting that 
allows causal influence between the features.
This notion aims to equalize the cost of recourse required for a rejected individual to obtain an improved outcome if the individual is from a different group.
Individual-level equal recourse shares a similar spirit with EI since both of them are taking care of equalizing the potential to improve the decision outcome for the rejected samples. 
However, introducing individual-level fairness with respect to different groups inherently requires counterfactual fairness, which has its own limitation, as described in \citet{wu2019counterfactual}, and it is also vulnerable to outliers. 
\citet{huang2019effort} proposed a causal-based fairness notion to equalize the minimum level of effort such that the expected prediction score of the groups is equal to each other.
Note that, their definition is specific to causal settings and it considers the whole sensitive groups instead of the rejected samples of the sensitive groups. 
In addition to fairness notions, we also discuss other related works such as fairness-aware algorithms in Sec.~\ref{sec:related_work}.
\begin{table}[t]
    \vspace{-0.65in}
    \caption{Comparison of our EI fairness with existing fairness notions.
    }    
    \label{tab:comparison}
    \resizebox{\textwidth}{!}{
    \centering
    \begin{tabular}{l|l|c|p{3cm}}
        \toprule
        Name of fairness & Definition & \begin{tabular}{@{}l@{}} Consider \\ efforts?  \end{tabular} & Limitations \\
        \toprule
        \textbf{Equal Improvability (Ours)} & $ \sP\left(\max\limits_{\mu(\Delta \ix) \leq \delta} f(\rvx + \Delta \rvx)\hspace{-0.5mm} \ge \hspace{-0.5mm} 0.5 \mid f(\rvx)  \hspace{-0.5mm} <  \hspace{-0.5mm} 0.5, \ervz = z \hspace{-0.5mm} \right) \hspace{-0.5mm} = \hspace{-0.5mm} \sP\left(\max\limits_{\mu(\Delta \ix) \leq \delta} f(\rvx + \Delta \rvx) \hspace{-0.5mm} \ge \hspace{-0.5mm} 0.5 \mid f(\rvx) \hspace{-0.5mm} < \hspace{-0.5mm} 0.5 \hspace{-0.5mm} \right)$ & Yes & - \\
        \midrule 
        Demographic Parity & $ \sP\left(f(\rvx) \ge 0.5\mid\ervz=z\right) = \sP\left(f(\rvx) \ge 0.5\right)$ & No & - \\
        Equal Opportunity~\citep{Hardt2016Equality} & $\sP\left(f(\rvx) \ge 0.5\mid\ervy=1,\ervz=z\right) = \sP\left(f(\rvx) \ge 0.5\mid\ervy=1\right)$ & No & - \\
        Equalized Odds~\citep{Hardt2016Equality} & $\sP\left(f(\rvx) \ge 0.5\mid\ervy=y,\ervz=z\right) = \sP\left(f(\rvx) \ge 0.5\mid\ervy=y\right)$ & No & - \\
        \begin{tabular}{@{}l@{}}
             Bounded Effort\\\citep{heidari2019longterm}
        \end{tabular}
         & $\sP\left(\max\limits_{\mu(\Delta \ix) \leq \delta}f(\rvx+\Delta\rvx) \ge 0.5, f(\rvx) < 0.5 \mid \ervz = z\right) = \sP\left(\max\limits_{\mu(\Delta \ix) \leq \delta}f(\rvx+\Delta\rvx) \ge 0.5, f(\rvx) < 0.5 \right)$  & Yes & Cannot handle imbalanced group negative rates~(Appendix~\ref{app:imbalanced_rate}) \\
        \begin{tabular}{@{}l@{}}
        Equal Recourse\\\citep{gupta2019equalizing}\end{tabular} & $\mathbb{E}\left[\min\limits_{f(\rvx + \Delta \rvx) \ge 0.5}\mu(\Delta \rvx)\mid f(\rvx)<0.5,\ervz=z\right]=\mathbb{E}\left[\min\limits_{f(\rvx + \Delta \rvx)\ge 0.5}\mu(\Delta \rvx )\mid f(\rvx)<0.5\right]$ 
        & Yes & Vulnerable to outliers~(Appendix~\ref{app:analysis_ei_er} and~\ref{app:outlier}) \\
        \begin{tabular}{@{}l@{}}
        Individual-Level Fair Causal Recourse
        \\\citep{von2022fairness}
        \end{tabular} & 
        $\min\limits_{f(\rvx') \geq 0.5} \mu (\rvx' - \rvx(\ervz,\rvu)) = \min\limits_{f(\rvx') \geq 0.5} \mu (\rvx' - \rvx(\ervz',\rvu))$ for all latent variable $\rvu$, all groups $\ervz, \ervz'$
        & Yes & Limitations of counterfactual fairness \\
        \bottomrule
    \end{tabular}
    \vspace{-1.5in}
    }
    \vspace{-0.2in}
\end{table}

\vspace{-0.15in}
\paragraph{Compatibility of EI with other fairness notions.}
Here we prove the compatibility of three fairness notions (EI, DP, and BE), under two mild assumptions. 
Assumption~\ref{assumption:cond} ensures that EI is well-defined, while
Assumption~\ref{assumption:norm} implies that the norm $\mu$ and the effort budget $\delta$ are chosen such that we have nonzero probability that unqualified individuals can become qualified after making efforts. 
\begin{assumption}\label{assumption:cond}
For any classifier $f$, the probability of unqualified samples for each demographic group is not equal to 0, \ie $\sP\left(f(\rvx)<0.5,\ervz=z\right)\neq0$ for all $z \in \gZ$. 
\end{assumption}
\begin{assumption}\label{assumption:norm}
For any classifier $f$, the probability of being qualified after the effort for unqualified samples is not equal to 0, \ie $\sP\left(\max_{\mu(\Delta \ix) \leq \delta}f(\rvx+\Delta\rvx) \ge 0.5,f(\rvx)<0.5\right)\neq0$. 
\vspace{-2mm}
\end{assumption}
Under these assumptions, the following theorem reveals the relationship between DP, EI and BE.
\begin{theorem}\label{lemma:notion_relationship_main}
If a classifier $f$ achieves two of the following three fairness notions, DP, EI, and BE; then it has to achieve the remaining fairness notion as well.
\vspace{-2mm}
\end{theorem}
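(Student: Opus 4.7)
The plan is to reduce the three fairness notions to three simple numerical quantities and to observe that the BE quantity is exactly the product of the quantities associated with DP and EI. For each $z \in \gZ$, write
$$
A_z := \sP(f(\rvx) < 0.5 \mid \ervz = z), \quad B_z := \sP\bigl(\max_{\mu(\Delta \ix) \le \delta} f(\rvx + \Delta \rvx) \ge 0.5 \mid f(\rvx) < 0.5, \ervz = z\bigr),
$$
together with their marginal counterparts $A := \sP(f(\rvx) < 0.5)$ and $B := \sP(\max_{\mu(\Delta \ix) \le \delta} f(\rvx + \Delta \rvx) \ge 0.5 \mid f(\rvx) < 0.5)$. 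With this notation, DP is equivalent to $A_z = A$ for all $z$, EI is exactly $B_z = B$ for all $z$, and BE (in the form given in Table~\ref{tab:comparison}) unfolds via the chain rule as $A_z B_z = A B$ for all $z$.

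First I would verify the chain-rule identities $\sP(\max_{\mu(\Delta \ix) \le \delta} f(\rvx + \Delta \rvx) \ge 0.5, f(\rvx) < 0.5 \mid \ervz = z) = A_z B_z$ and its unconditional analogue $= AB$; these are direct applications of the definition of conditional probability, with Assumption~\ref{assumption:cond} guaranteeing $A_z > 0$ so the conditioning is well-defined, and Assumption~\ref{assumption:norm} guaranteeing $AB > 0$. Next I would run through the three implications. If DP and EI hold, then $A_z B_z = A B$ holds termwise, which is BE. If DP and BE hold, then $A B_z = A_z B_z = A B$; since Assumption~\ref{assumption:norm} gives $AB > 0$ and Assumption~\ref{assumption:cond} gives $A > 0$, we may divide by $A$ to get $B_z = B$, which is EI. If EI and BE hold, then $A_z B = A_z B_z = A B$; Assumption~\ref{assumption:norm} combined with $A_z B_z = AB$ forces $B > 0$, so dividing yields $A_z = A$, i.e., DP.

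There is no real obstacle: the whole argument is bookkeeping once the right quantities are introduced. The only mildly delicate step is invoking the two assumptions in the correct places to ensure that the denominators we divide by are strictly positive, in particular to deduce $B > 0$ from Assumption~\ref{assumption:norm} in the last case (where we first use $AB = A_z B_z > 0$ together with $A_z \le 1$ to infer $B > 0$, then divide). I would write the proof as three short parallel paragraphs, one per implication, preceded by the paragraph that sets up $A_z, B_z, A, B$ and records $A_z B_z = A B_z \cdot (A_z/A) \cdot \ldots$—actually just the cleaner identity $\sP(\text{BE event}\mid \ervz = z) = A_z B_z$.
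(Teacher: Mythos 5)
Your proof is correct and is essentially the paper's own argument: the paper likewise uses the chain-rule factorization (BE quantity $=$ negative-rate quantity $\times$ EI quantity) and obtains each implication by multiplying or dividing the corresponding pair of identities, with Assumptions~\ref{assumption:cond} and~\ref{assumption:norm} guaranteeing the divisions are legitimate. Your $A_z,B_z,A,B$ notation is just a cleaner packaging of the same three steps.
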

The proof of the Theorem~\ref{lemma:notion_relationship_main} is provided in Appendix~\ref{appendix:proof}.
This theorem immediately implies the following corollary, which provides a condition such that EI and BE conflict with each other. 
\begin{corollary}\label{corollary:ei_be}
The above theorem says that if a classifier $f$ achieves EI and BE, it has to achieve DP.
Thus, by contraposition, if $f$ does not achieve DP, then it cannot achieve EI and BE simultaneously.
\end{corollary}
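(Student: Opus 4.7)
The plan is to derive the corollary as a direct logical consequence of Theorem~\ref{lemma:notion_relationship_main}, without invoking Assumptions~\ref{assumption:cond} or~\ref{assumption:norm} beyond what the theorem already uses. Theorem~\ref{lemma:notion_relationship_main} asserts a three-way symmetric statement: among the three fairness notions DP, EI, and BE, any two imply the remaining one. I would first specialize this to one of its three pairwise instantiations, namely the implication that achieving both EI and BE forces DP to also hold. This step is a literal reading of the theorem statement and requires no additional argument.

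Next, I would apply the contrapositive. The implication $(\text{EI} \wedge \text{BE}) \Rightarrow \text{DP}$ is logically equivalent to $\neg \text{DP} \Rightarrow \neg(\text{EI} \wedge \text{BE})$, which by De Morgan rewrites as $\neg \text{DP} \Rightarrow (\neg \text{EI} \vee \neg \text{BE})$. In words, if a classifier fails to satisfy DP, then at least one of EI and BE must fail, so the two cannot be achieved simultaneously. This is precisely the statement of Corollary~\ref{corollary:ei_be}.

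There is essentially no mathematical obstacle here, since all of the nontrivial work is carried by Theorem~\ref{lemma:notion_relationship_main}; the corollary is a pure propositional-logic consequence. The only point to handle with care is presentational: I would state clearly which of the three symmetric instantiations of the theorem is being used (the pair EI and BE implying DP), so that the reader is not left wondering whether the other two pairings are also needed, and I would explicitly name the contrapositive step so that the equivalence with the original formulation of the theorem is transparent.
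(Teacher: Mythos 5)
Your proposal is correct and matches the paper's argument: the paper likewise proves the corollary as an immediate consequence of Theorem~\ref{lemma:notion_relationship_main}, using the instantiation that EI together with BE implies DP and then taking the contrapositive. No gap here.
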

\vspace{-0.15cm}
Besides, we also investigate the connections between EI and ER in Appendix~\ref{app:ei_vs_er}.
\vspace{-0.15cm}
\vspace{-.1in}
\section{Achieving Equal Improvability}\label{sec:alg}
\vspace{-2mm}
In this section, we discuss methods for finding a classifier that achieves EI fairness.
Following existing in-processing techniques~\citep{Zafar2017FC,Donini2018ERM,Zafar2017Mistreatment,cho2020kde}, we focus on finding a fair classifier by solving a fairness-regularized optimization problem.
To be specific, we first derive a differentiable penalty term $U_\delta$ that approximates the unfairness with respect to EI, and then solve a regularized empirical minimization problem having the unfairness as the regularization term. 
This optimization problem can be represented as
\vspace{-1mm}
\begin{equation}\label{prb:opt}
    \min_{f\in \gF}  
    \left\{ \frac{(1-\lambda)}{N}\sum_{i = 1}^N \ell(\ervy_i, f(\rvx_i)) 
    +  \lambda U_\delta \right\},
\vspace{-1mm}
\end{equation}
where $\set{(\rvx_i, \ervy_i)}_{i=1}^N$ is the given dataset, $\ell: \set{0,1}\times [0,1] \to \sR$ is the loss function, $\gF$ is the set of classifiers we are searching over, and $\lambda \in [0,1)$ is a hyperparameter that balances fairness and prediction loss.
Here we consider three different ways of defining the penalty term $U_\delta$, which are (a) covariance-based, (b) kernel density estimator (KDE)-based, and (c) loss-based methods.
We first introduce how we define $U_\delta$ in each method, and then discuss how we solve~\eqref{prb:opt}. 
\vspace{-.1in}
\paragraph{Covariance-based EI penalty.}
Our first method is inspired by~\citet{Zafar2017FC}, which measures the unfairness of a score-based classifier $f$ by the covariance of the sensitive attribute $\ervz$ and the score $f(\rvx)$, when the demographic parity (DP) fairness condition ($\mathbb{P}( f(\rvx) > 0.5 | \ervz = z) = \mathbb{P}( f(\rvx) > 0.5 ) $ holds for all $z$) is considered.
The intuition behind this idea of measuring the covariance is that a perfect fair DP classifier should have zero correlation between $\ervz$ and $f(\rvx)$.
By applying similar approach to our fairness notion in Def.~\ref{def:EI}, the EI unfairness is measured by the covariance between the sensitive attribute $\ervz$ and the maximally improved score of rejected samples within the effort budget.
In other words, $(\operatorname{Cov}(\ervz, \max_{\norm{\Delta \ix} \leq \delta}f(\rvx+\Delta \rvx) \mid f(\rvx) < 0.5))^2$ represents the EI unfairness of a classifier $f$ where we took the square to penalize negative correlation case as well.
Let $I_{-} = \set{i:  f(\rvx_i)<0.5}$ be the set of indices of unqualified samples, and $\bar{\ervz} = \sum_{i \in I_{-}} \ervz_i/ |I_{-}|$.
Then, EI unfairness can be approximated by the square of the empirical covariance, \ie
\vspace{-.05in}
\begin{align}
    U_{\delta}
    \triangleq \left( \frac{1}{|I_{-}|} \sum_{i\in I_{-}} (\ervz_i - \bar{\ervz}) \left( \max_{\norm{\Delta \ix_i} \leq \delta}f(\rvx_i+\Delta \rvx_i)   - \sum_{j\in I_{-}}\max_{\norm{\Delta \ix_j} \leq \delta}f(\rvx_j+\Delta \rvx_j)/|I_{-}| \right) \right)^2.
\end{align}
Since 
$ \sum_{i\in I_{-}} (\ervz_i - \bar{\ervz})\left(\sum_{j\in I_{-}}\max_{\norm{\Delta \ix_j} \leq \delta}f(\rvx_j+\Delta \rvx_j)/|I_{-}| \right) =0$ from $\sum_{i\in I_{-}} (\ervz_i - \bar{\ervz}) = 0$, 
we have
$    U_{\delta} = \left(\frac{1}{|I_{-}|} \sum_{i\in I_{-}} (\ervz_i - \bar{\ervz})\max_{\norm{\Delta \ix_i} \leq \delta}f(\rvx_i+\Delta \rvx_i)\right)^2.$

\vspace{-.1in}
\paragraph{KDE-based EI penalty.}
The second method is inspired by~\citet{cho2020kde}, which suggests to first approximate the probability density function of the score $f(\vx)$ via kernel density estimator (KDE) and then put the estimated density formula into the probability term in the unfairness penalty.  
Recall that given $m$ samples $\ervy_1,\dots,\ervy_m$, the true density $g_{\ervy}$ on $\ervy$ is estimated by KDE as $\hat{g}_{\ervy}(\hat{\ervy}) \triangleq \frac{1}{mh}\sum_{i=1}^{m}g_k\left(\frac{\hat{\ervy}-\ervy_i}{h}\right)$, where $g_k$ is a kernel function and $h$ is a smoothing parameter.

Here we apply this KDE-based method for estimating the EI penalty term in Def.~\ref{def:EI}.
Let $\ervy_i^{\max} = \max_{\norm{\Delta \ix_i} \leq \delta}f(\rvx_i+\Delta \rvx_i)$ be the maximum score achievable by improving feature $\rvx_i$ within budget $\delta$, and $I_{-,z} = \set{i:  f(\rvx_i)<0.5,  \ervz_i=z}$ be the set of indices of unqualified samples of group $z$. 
Then, the density of $\ervy_i^{\max}$ for the unqualified samples in group $z$ can be approximated as\footnote{This term is differentiable with respect to model parameters, since $g_k$ is differentiable w.r.t $\ervy_i^{\max}$, and $\ervy_i^{\max} = \max_{\norm{\Delta \ix_i} \leq \delta}f(\rvx_i+\Delta \rvx_i)$ is differentiable w.r.t. model parameters from~\citep{danskin}.}
\begin{equation}
    \hat{g}_{\ervy^{\max}|f(\rvx)<0.5,z}(\hat{\ervy}^{\max}) \triangleq \frac{1}{|I_{-,z}|h}\sum_{i\in I_{-,z}}g_k\left(\frac{\hat{\ervy}^{\max}-\ervy_i^{\max}}{h}\right).
\end{equation}
Then, the estimate on the left-hand-side (LHS) probability term in Def.~\ref{def:EI} is represented as $    \hat{\sP}\left(\max_{\mu(\Delta \ix) \leq \delta} f(\rvx + \Delta \rvx) \ge  0.5 \mid f(\rvx)   <   0.5, \ervz = z \right) =\int_{0.5}^{\infty}\hat{g}_{\ervy^{\max}|f(\rvx)<0.5,z}(\hat{\ervy}^{\max})\mathrm{d}\hat{\ervy}^{\max} = \frac{1}{|I_{-,z}|h}\sum_{i\in I_{-,z}}G_k\left(\frac{0.5-\ervy_i^{\max}}{h}\right)
    $ where $G_k(\tau)\triangleq\int_{\tau}^{\infty}g_k(y)\mathrm{d}y$.
Similarly, we can estimate the right-hand-side (RHS) probability term in Def.~\ref{def:EI}, and the EI-penalty $U_{\delta}$ is computed as the summation of the absolute difference of the two probability values (LHS and RHS) among all groups $z$.

\vspace{-.1in}
\paragraph{Loss-based EI penalty.}
Another common way of approximating the fairness violation as a differentiable term is to compute the absolute difference of group-specific losses~\citep{roh2021fairbatch,shen2022optimising}. 
Following the spirit of EI notion in Def.~\ref{def:EI}, we define EI loss of group $z$ as $\Tilde{L}_z \triangleq \frac{1}{|I_{-,z}|}\sum_{i \in I_{-,z}}\ell\left(1,\max_{\norm{\Delta \ix_i} \leq \delta}f(\rvx_i+\Delta \rvx_i)\right)$.
Here, $\Tilde{L}_z$ measures how far the rejected samples in group $z$ are away from being accepted after the feature improvement within budget $\delta$.
Similarly, EI loss for all groups is written as $\Tilde{L} \triangleq \sum_{z \in \gZ} \frac{I_{-,z}}{I_{-}} \Tilde{L}_z$.
Finally, the EI penalty term is defined as $U_{\delta} \triangleq \sum_{z \in \gZ}\left|\Tilde{L}_z-\Tilde{L}\right|$.

\vspace{-.1in}
\paragraph{Solving~\eqref{prb:opt}.}
For each approach defined above (covariance-based, KDE-based and loss-based), the penalty term $U_{\delta}$ is defined uniquely.
Note that in all cases, we need to solve a maximization problem $\max_{\norm{\Delta \ix} \leq \delta}f(\rvx+\Delta \rvx)$ in order to get $U_{\delta}$.
Since~\eqref{prb:opt} is a minimization problem containing $U_{\delta}$ in the cost function, it is essentially a minimax problem.
We leverage adversarial training techniques to solve~\eqref{prb:opt}.
The inner maximization problem is solved using one of two methods: (i) derive the closed-form solution for generalized linear models, (ii) use projected gradient descent for general settings. The details can be found in Appendix~\ref{appendix:sol_max}.

\vspace{-.12in}
\section{Experiments}\label{sec:experiment}
\vspace{-.1in}
This section presents the empirical results of our EI fairness notion.
To measure the fairness violation, we use EI disparity$ = \max_{z\in[Z]}| \sP (\max_{\mu(\Delta \ix) < \delta} f(\rvx + \Delta \rvx) \geq 0.5 \mid f(\rvx) < 0.5, \rz = z) - \sP (\max_{\mu(\Delta \ix) < \delta} f(\rvx + \Delta \rvx) \geq 0.5 \mid f(\rvx) < 0.5)|$.
First, in Sec.~\ref{sec:exp:alg}, we show that our methods suggested in Sec.~\ref{sec:alg} achieve EI fairness in various real/synthetic datasets. 
Second, in Sec.~\ref{sec:exp:dynamics}, focusing on the dynamic scenario where each individual can make effort to improve its outcome, we demonstrate that training an EI classifier at each time step promotes achieving the long-term fairness, \ie the feature distribution of two groups become identical in the long run. 
In Appendix~\ref{app:exp:comparison}, we put additional experimental results comparing the robustness of EI and related notions including ER and BE. Specifically, we compare (i) the outlier-robustness of EI and ER, and (ii) the robustness of EI and BE to imbalanced group negative rates.
\vspace{-.1in}
\subsection{Suggested methods achieve EI fairness}\label{sec:exp:alg}
\vspace{-2mm}

Recall that Sec.~\ref{sec:alg} provided three approaches for achieving EI fairness. 
Here we check whether such methods successfully find a classifier with a small EI disparity, compared with ERM which does not have fairness constraints. 
We train all algorithms using logistic regression (LR) in this experiment.
Due to the page limit, we defer the presentation of results for (1) a two-layer ReLU neural network with four hidden neurons and (2) a five-layer ReLU network with 200 hidden neurons per layer to Appendix~\ref{app:algorithm_eval}, and provide more details of hyperparameter selection in Appendix~\ref{app:hyper}.

\vspace{-.1in}
\paragraph{Experiment setting.} 
For all experiments, we use the Adam optimizer and cross-entropy loss. We perform cross-validation on the training set to find the best hyperparameter. We provide statistics for five trials having different random seeds. 
For the KDE-based approach, we use the Gaussian kernel.

\vspace{-.1in}
\paragraph{Datasets.} We perform the experiments on one synthetic dataset, and two real datasets: German Statlog Credit~\citep{dataset:adult}, and ACSIncome-CA~\citep{ding2021retiring}.
The synthetic dataset has two non-sensitive attributes $\rvx = (\ervx_1, \ervx_2)$, one binary sensitive attribute $\ervz$, and a binary label $\ervy$. Both features $\ervx_1$ and $\ervx_2$ are assumed to be improvable. 
We generate 20,000 samples where $(\rvx, \ervy, \ervz)$ pair of each sample is generated independently as below.
We define $\ervz$ and $(\ervy|\ervz=z)$ as Bernoulli random variables for all $z \in \{0,1\}$, and define $(\rvx|\ervy=y,\ervz=z)$ as multivariate Gaussian random variables for all $y,z \in \{0,1\}$.
The numerical details are in Appendix~\ref{app:synth}
The maximum effort $\delta$ for this dataset is set to 0.5. 
The ratio of the training versus test data is 4:1. 

German Statlog Credit dataset contains 1,000 samples and the ratio of the training versus test data is 4:1. 
The task is to predict the credit risk of an individual given its financial status.
Following~\citet{jiang2020biascorrecting}, we divide the samples into two groups using the age of thirty as the boundary, \ie $z=1$ for samples with age above thirty.
Four features $\rvx$ are considered as improvable: \texttt{checking account}, \texttt{saving account}, \texttt{housing} and \texttt{occupation}, all of which are ordered categorical features. 
For example, the \texttt{occupation} feature has four levels: (1) unemployed, (2) unskilled, (3) skilled, and (4) highly qualified. 
We set the maximum effort $\delta=1$, meaning that an unskilled man (with level 2) can become a skilled man, but cannot be a highly qualified man. 

ACSIncome-CA dataset consists of data for 195,665 people and is split into training/test sets in the ratio of 4:1.
The task is predicting whether a person’s income would exceed 50K USD per year. 
We use sex as the sensitive attribute; we have two sensitive groups, male and female.
We select education level (ordered categorical feature) as the improvable feature. We set the maximum effort $\delta=3$.

\begin{table*}[t]
\vspace{-.6in}
\caption{
\textbf{Error rate and EI disparities of ERM and three proposed EI-regularized methods on logistic regression (LR).}
For each dataset, the lowest EI disparity (disp.) value is in boldface. 
Classifiers obtained by our three methods have much smaller EI disparity values than the ERM solution, without having much additional error. 
}
\label{results-table:all}
\vskip 0.2in
\vspace{-6mm}
\setlength\tabcolsep{6pt}
\begin{center}
\tiny{
\begin{sc}
\begin{tabular}{p{1.8cm}|l|cccccc}

\toprule
& & \multicolumn{4}{c}{Methods} \\
Dataset  & Metric & ERM &  Covariance-based & KDE-based & Loss-based \\
\midrule
Synthetic & 
\begin{tabular}{@{}c@{}}Error Rate($\downarrow$)\\EI Disp.($\downarrow$)\end{tabular} & 
\begin{tabular}{@{}c@{}}$.221\pm.001$\\$.117\pm.007$\end{tabular} & 
\begin{tabular}{@{}c@{}}$.253\pm.003$\\$.003\pm.001$\end{tabular} & 
\begin{tabular}{@{}c@{}}$.250\pm.001$\\$.005\pm.003$\end{tabular} & 
\begin{tabular}{@{}c@{}}$.246\pm.001$\\$\bm{.002\pm.001}$\end{tabular} \\
\midrule
\begin{tabular}{@{}l@{}} German  Stat.\end{tabular} & 
\begin{tabular}{@{}c@{}}Error Rate($\downarrow$)\\EI Disp.($\downarrow$)\end{tabular} & 
\begin{tabular}{@{}c@{}}$.220\pm.009$\\$.041\pm.008$\end{tabular} & 
\begin{tabular}{@{}c@{}}$.262\pm.009$\\$.021\pm.019$\end{tabular} & 
\begin{tabular}{@{}c@{}}$.243\pm.024$\\$.035\pm.026$\end{tabular} & 
\begin{tabular}{@{}c@{}}$.237\pm.008$\\$\bm{.015\pm.009}$\end{tabular} \\
\midrule
ACSIncome-CA& \begin{tabular}{@{}c@{}}Error Rate($\downarrow$)\\EI Disp.($\downarrow$)\end{tabular} & 
\begin{tabular}{@{}c@{}}$.184\pm.000$\\$.031\pm.001$\end{tabular} & 
\begin{tabular}{@{}c@{}}$.200\pm.000$\\$.008\pm.001$\end{tabular} & 
\begin{tabular}{@{}c@{}}$.196\pm.000$\\$\bm{.005\pm.001}$\end{tabular} & 
\begin{tabular}{@{}c@{}}$.193\pm.000$\\$.006\pm.001$\end{tabular} \\
\bottomrule
\end{tabular}
\end{sc}}
\end{center}
\vskip -0.3in
\end{table*}

\begin{wrapfigure}{r}{0.6\textwidth}
    \centering
    \vspace{-.15in}
    \includegraphics[width = 0.6\textwidth]{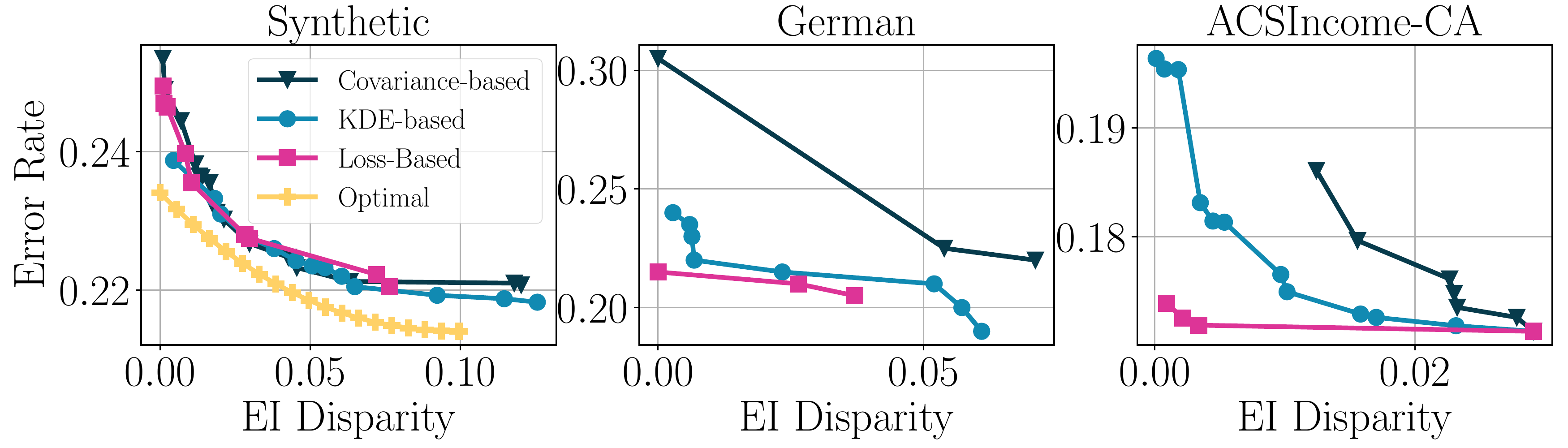}
    \vspace{-.32in}
    \caption{
    \textbf{Tradeoff between EI disparity and error rate.} We run three EI-regularized methods suggested in Sec.~\ref{sec:alg} for different regularizer coefficient $\lambda$ and plot the frontier lines. 
    For the synthetic dataset, the tradeoff curve for the ideal classifier is located at the bottom left corner, which is similar to the curves of proposed EI-regularized methods.  
    This shows that our methods successfully find classifiers balancing EI disparity and error rate. 
    }
    \vspace{-.1in}
    \label{fig:tradeoff_curves}
\end{wrapfigure}
\vspace{-.1in}
\paragraph{Results.}
Table~\ref{results-table:all} shows the test error rate and test EI disparity (disp.) for ERM and our three EI-regularized methods (covariance-based, KDE-based, and loss-based) suggested in Sec.~\ref{sec:alg}.
For all three datasets, our EI-regularized methods successfully reduce the EI disparity without increasing the error rate too much, compared with ERM. 
Figure~\ref{fig:tradeoff_curves} shows the tradeoff between the error rate and EI disparity of our EI-regularized methods. 
We marked the dots after running each method multiple times with different penalty coefficients $\lambda$, and plotted the frontier line.
For the synthetic dataset with the Gaussian feature, we numerically obtained the performance of the optimal EI classifier, which is added in the yellow line at the bottom left corner of the first column plot.
The details of finding the optimal EI classifier is in Appendix~\ref{appendix:optimal tradeoff}.
One can confirm that our methods of regularizing EI are having similar tradeoff curves for the synthetic dataset. 
Especially, for the synthetic dataset, the tradeoff curve of our methods nearly achieves that of the optimal EI classifier.
For German and ACSIncome-CA datasets, the loss-based method is having a slightly better tradeoff curve than other methods. 
\vspace{-.15in}
\subsection{EI promotes long-term fairness in dynamic scenarios}\label{sec:exp:dynamics}
\vspace{-2mm}

Here we provide simulation results on the dynamic setting, showing that EI classifier encourages the \emph{long-term fairness}, \ie equalizes the feature distribution of different groups in the long run~\footnote{
Appendix~\ref{sec:theory_long_term_fairness}  provides analysis on the effect of fairness notions on the long-term fairness when a single step of feature improvement is applied, for toy examples. 
Our results show that EI better equalizes the feature distribution (compared with other fairness notions), which coincides with our empirical results in Sec.~\ref{sec:exp:dynamics}.}.
\vspace{-.15in}
\subsubsection{Dynamic system description}\label{sec:dynamic_setup}
\vspace{-2mm}
We consider a binary classification problem under the dynamic scenario with $T$ rounds, where the improvable feature $\rvx_t \in \sR$ and the label $\rvy_t \in \{0,1\}$ of each sample as well as the classifier $f_t$ evolve at each round $t \in \{0, \cdots, T-1\}$. We denote the sensitive attribute as $\ervz \in \{0,1\}$, and the estimated label as $\haty_t$. We assume $\ervz \sim \op{Bern}(0.5)$ and $ (\ervx_t \mid \ervz = z) \sim \gP_t^{(z)} = \gN (\mu^{(z)}_t, \{\sigma^{(z)}_t\}^2)$. 
To mimic the admission problem, we only accept a fraction $\alpha \in (0,1)$ of the population, \ie the true label is modeled as $\ervy_t = \mathbf{1}\{\ervx_t \geq \chi_{\alpha}^{(t)}\}$, where $\chi_{\alpha}^{(t)}$ is the $(1-\alpha)$ percentile of the feature distribution at round $t$. We consider $\ervz$-aware linear classifier outputting $\haty_t = \mathbf{1}\{\ervx_t \geq \tau_t^{(\ervz)}\}$, which is parameterized by the thresholds $(\tau_t^{(0)}, \tau_t^{(1)})$ for two groups. 
Note that this classification rule is equivalent to defining score function $f_t(x,z) = 1/(\exp(\tau_t^{(z)} -x) + 1)$ and $\haty_t = \mathbf{1}\{f(\rvx_t, \ervz) \geq 0.5\}$.

\vspace{-.15in}
\paragraph{Updating data parameters $(\mu_{t}^{(z)}, \sigma_{t}^{(z)})$.} 
At each round $t$, we allow each sample can improve its feature from $x$ to $x+\epsilon(x)$. 
Here we model $\epsilon(x) = \nu(x;z) = \frac{1}{(\tau_t^{(z)} - x + \beta)^2} \mathbf{1}\{x<\tau_t^{(z)}\}$ for a constant $\beta > 0$. In this model, the rejected samples with larger gap $(\tau_t^{(z)} - x)$ with the decision boundary are making less effort $\Delta x$, which is inspired by the intuition that a rejected sample is less motivated to improve its feature if it needs to take a large amount of effort to get accepted in one scoop. 
More detailed justification of the $\epsilon(\cdot)$ selection is provided in Appendix~\ref{app:justification_dynamic}. 
After such effort is made, we compute the mean and standard deviation of each group: $\mu_{t+1}^{(z)} = \int_{-\infty}^\infty (x+\nu(x;z))\phi(x;\mu_t^{(z)}, \sigma_t^{(z)}) \ervd x$ and $\sigma_{t+1}^{(z)}= \sqrt{\int_{-\infty}^\infty (x+\nu(x;z) - \mu_{t+1}^{(z)})^2\phi(x;\mu_t^{(z)}, \sigma_t^{(z)}) \ervd x}$, where $\phi(\cdot;\mu,\sigma)$ is the pdf of $\mathcal{N}(\mu,\sigma^2)$.
We assume that the feature $\rvx_{t+1}$ in the next round follows a Gaussian distribution parameterized by $(\mu_{t+1}^{(z)}, \sigma_{t+1}^{(z)})$ for ease of simulation.

\vspace{-.15in}
\paragraph{Updating classifier parameters $(\tau_{t}^{(0)}, \tau_{t}^{(1)})$.}

At each round $t$, we update the classifier depending on the current feature distribution $\rvx_t$. 
The EI classifier considered in this paper updates $(\tau_{t}^{(0)}, \tau_{t}^{(1)})$ as below. Note that the maximized score $\max_{\norm{\Delta \ix} \leq \delta}f(\rvx + \Delta \rvx)$ in Def.~\ref{def:EI} can be written as $f_t(\ervx_t + \delta_t, \ervz)$, and the equation $\max_{\norm{\Delta \ix} \leq \delta}f(\rvx + \Delta \rvx) \geq 0.5$ is equivalent to $\ervx_t + \delta_t > \tau^{(\ervz)}_t$.
Consequently, EI classifier obtains $(\tau_{t}^{(0)}, \tau_{t}^{(1)})$ by solving
\begin{align}
    \vspace{-4mm}
    &\min_{\tau_t^{(0)}, \tau_t^{(1)}} \left|\sP(\ervx_t \hspace{-.5mm} + \hspace{-.5mm} \delta_t \hspace{-.5mm} > \hspace{-.5mm} \tau^{(0)}_t
    \hspace{-.5mm} \mid 
    \hspace{-.5mm} \ervz = 0, \hspace{-.5mm} \ervx_t 
    \hspace{-.5mm} < 
    \hspace{-.5mm} \tau_t^{(0)}) \hspace{-.5mm} -
    \hspace{-.5mm} \sP(\ervx_t 
    \hspace{-.5mm} + 
    \hspace{-.5mm} \delta_t 
    \hspace{-.5mm} > \hspace{-.5mm}  \tau^{(1)}_t
    \hspace{-.5mm} \mid 
    \hspace{-.5mm} \ervz = 1, \hspace{-.5mm} \ervx_t 
    \hspace{-.5mm} <
    \hspace{-.5mm} \tau^{(1)}_t)\right| ~\text{s.t.}~\sP (\haty_t \neq \ervy_t) \leq c,
    \vspace{-4mm}
\end{align}
where $c\in[0,1)$ is the maximum classification error rate we allow, and $\delta_t$ is the effort level at iteration $t$. In our experiments, $\delta_t$ is chosen as the mean efforts the population makes, \ie $\delta_t = 0.5 \sum_{z=0}^1 \int_{-\infty}^{\infty} \nu(x;z) \phi (x; \mu_t^{(z)}, \sigma_t^{(z)})\mathrm{d} x$.
Meanwhile, we can similarly obtain the classifier for DP, BE, ER, and ILFCR constraints, details of which are in Appendix~\ref{sec:dyn}. 
In the experiments, we numerically obtain the solution of this optimization problem.

\begin{figure*}
    \vspace{-0.5in}
    \centering
    \includegraphics[width = .9\textwidth]{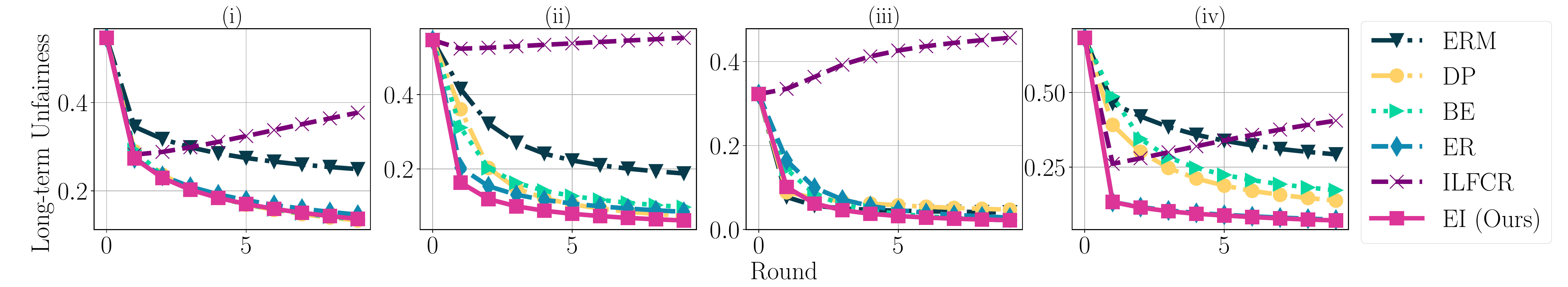}
    \vspace{-.1in}
    \caption{
    \textbf{Long-term unfairness $d_{\op{TV}}(\gP_t^{(0)}, \gP_t^{(1)})$ at each round $t$ for various algorithms.} 
    Consider the binary classification problem over two groups, under the dynamic scenario where the data distribution and the classifier for each group evolve over multiple rounds. 
    We plot how the long-term unfairness (measured by the total variation distance between two groups) changes as round $t$ increases.
    Here, each column shows the result for different initial feature distributions, details of which are given in Sec.~\ref{sec:exp_longterm}.
    The long-term unfairness of EI classifier reduces faster than other existing fairness notions, showing that EI proposed in this paper is helpful for achieving long-term fairness.
    }
    \vspace{-.2in}
    \label{fig:dynamics_curves}
\end{figure*}
\vspace{-.1in}
\subsubsection{Experiments on long-term fairness}\label{sec:exp_longterm}
\vspace{-2mm}
We first initialize the feature distribution in a way that both sensitive groups have either different mean (\ie $\mu^{(0)}_0 \ne \mu^{(1)}_0$) or different variance (\ie $\sigma^{(0)}_0 \ne \sigma^{(1)}_0$). 
At each round $t \in \{1, \cdots, T \}$, we update the data parameter $(\mu^{(z)}_t, \sigma^{(z)}_t)$ for group $z \in \{0, 1\}$ and the classifier parameter $(\tau_t^{(0)}, \tau_t^{(1)})$, following the rule described in Sec.~\ref{sec:dynamic_setup}. 
At each round $t \in \{1, \cdots, T\}$, we measure the \emph{long-term unfairness} defined as the total variation distance between the two group distributions: $d_{\text{TV}}(\gP^{(0)}, \gP^{(1)}) = {\frac{1}{2}}\int_{-\infty}^\infty |\phi(x;\mu_t^{(0)}, \sigma_t^{(0)}) - \phi(x;\mu_t^{(1)}, \sigma_t^{(1)})| \mathrm{d} x$.
We run experiments on four different initial feature distributions: 
(i) $(\mu_0^{(0)}, \sigma_0^{(0)}, \mu_0^{(1)}, \sigma_0^{(1)}) = (0, 1, 1, 0.5)$,
(ii) $(\mu_0^{(0)}, \sigma_0^{(0)}, \mu_0^{(1)}, \sigma_0^{(1)}) = (0, 0.5, 1, 1)$,
(iii) $(\mu_0^{(0)}, \sigma_0^{(0)}, \mu_0^{(1)}, \sigma_0^{(1)}) = (0, 2, 0, 1)$, and
(iv) $(\mu_0^{(0)}, \sigma_0^{(0)}, \mu_0^{(1)}, \sigma_0^{(1)}) = (0, 0.5, 1, 0.5)$, respectively.
We set $\alpha = 0.2, c = 0.1, \beta = 0.25$.

\vspace{-.1in}
\paragraph{Baselines.}
We compare our EI classifier with multiple baselines, including the empirical risk minimization (ERM) and algorithms with fairness constraints: demographic parity (DP), bounded effort (BE)~\citep{heidari2019longterm}, equal recourse (ER)~\citep{gupta2019equalizing}, and individual-level fair causal recourse (ILFCR)~\citep{von2022fairness}. 
In particular, we assume a causal model for solving the ILFCR classifier for this specific data distribution, which is described in detail in Appendix~\ref{sec:dyn}.

\vspace{-.15in}
\paragraph{Results.} 
Fig.~\ref{fig:dynamics_curves} shows how the long-term unfairness $d_{\op{TV}}(\gP_t^{(0)}, \gP_t^{(1)})$ changes as a function of round $t$, for cases (i) -- (iv) having different initial feature distribution. 
Note that except ILFCR, other fairness notions (DP, BE, ER, and EI) yield lower long-term unfairness compared with ERM, for cases (i), (ii), and (iv). 
More importantly, EI accelerates the process of mitigating long-term unfairness, compared to other fairness notions. 
This observation highlights the benefit of EI in promoting true equity of groups in the long run. 
Fig.~\ref{fig:evolution} visualizes the initial distribution (at the leftmost column) and the evolved distribution at round $t=3$ for multiple algorithms (at the rest of the columns). Each row represents different initial feature distribution, for cases (i) -- (iv).
One can confirm that EI brings the distribution of the two groups closer, compared with baselines.
Moreover, in Appendix~\ref{app:dynamics2}, we explore the long-term impact of fairness notions on a different dynamic model, where most existing methods have an adverse effect on long-term fairness, while EI continues to enhance it.
\begin{figure}[t]
    \vspace{-.5in}
    \centering
    \includegraphics[width=.9\textwidth]{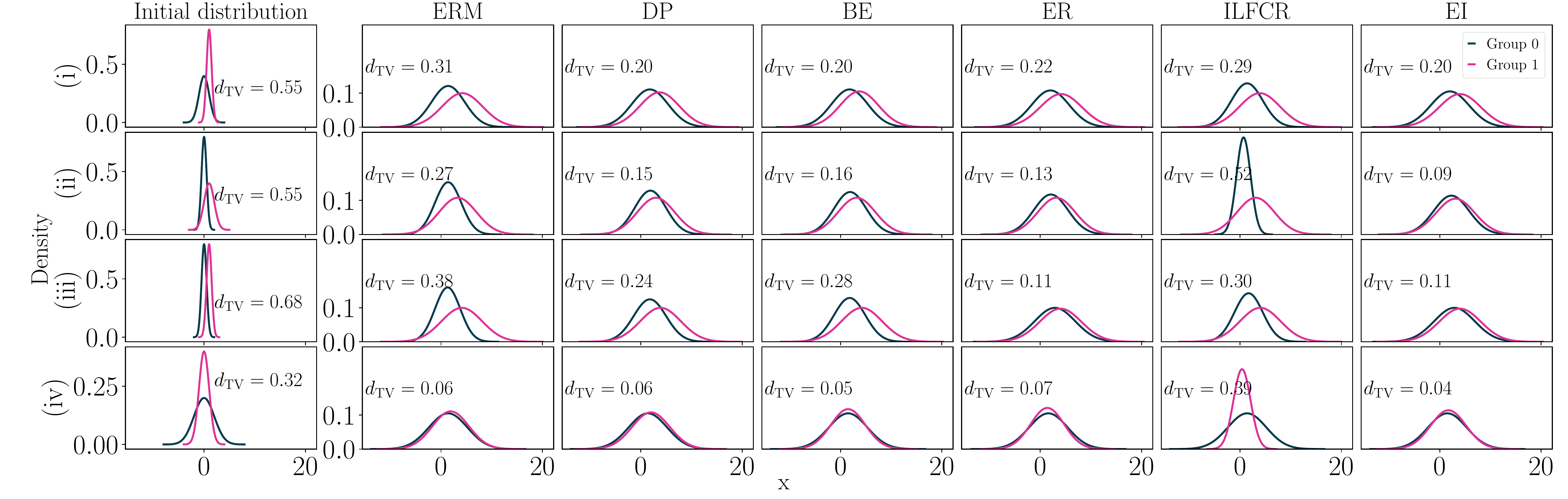}
    \vspace{-.2in}
    \caption{\textbf{Evolution of the feature distribution, when we apply each algorithm for $t=3$ rounds.} At each row, the leftmost column shows the initial distribution and the rest of the columns show the evolved distribution for each algorithm, under the dynamic setting. Compared with existing fairness notions (DP, BE, ER, and ILFCR), EI achieves a smaller feature distribution gap between groups. 
    }
    \vspace{-.2in}
    \label{fig:evolution}
\end{figure}
\vspace{-.15in}
\vspace{-1mm}
\section{Related Works}\label{sec:related_work}

\vspace{-.15in}
\paragraph{Fairness-aware algorithms.} Most of the existing fair learning techniques fall into three categories: i) pre-processing approaches~\citep{Kamiran2012Preprocessing,kamiran2010sampling,gordaliza2019ot,jiang2020biascorrecting}, which primarily involves massaging the dataset to remove the bias; ii) in-processing approaches~\citep{fukuchi2013neutrality,kamishima2012regularizer,calders2010bayes,Zafar2017FC,Zafar2017Mistreatment,Zhang2018Mitigating,cho2020kde, roh2020frtrain,roh2021fairbatch,shen2022optimising}, adjusting the model training for fairness; iii) post-processing approaches~\citep{calders2010bayes, algahamdi2020projection, wei2020score, Hardt2016Equality} which achieve fairness by modifying a given unfair classifier. 
Prior work~\citep{woodworth2017learn} showed that the in-processing approach generally outperforms other approaches due to its flexibility. 
Hence, we focus on the in-processing approach and propose three methods to achieve EI.
These methods achieve EI by solving fairness-regularized optimization problems.
In particular, our proposed fairness regularization terms are inspired by \citet{Zafar2017FC,cho2020kde,roh2021fairbatch,shen2022optimising}.

\vspace{-.15in}
\paragraph{Fairness notions related with EI}
See Table~\ref{tab:comparison} and the corresponding explanation given in Sec.~\ref{sec:prob_set}. 

\vspace{-.15in}
\paragraph{Fairness dynamics.} There are also a few attempts to study the long-term impact of different fairness policies~\citep{zhang2020longterm,heidari2019longterm,hashimoto2018dynamics}. 
In particular, \citet{hashimoto2018dynamics} studies how ERM amplifies the unfairness of a classifier in the long run.
The key idea is that if the classifier of the previous iteration favors a certain candidate group, then the candidate groups will be more unbalanced since fewer individuals from the unfavored group will apply for this position.
Thus, the negative feedback leads to a more unfair classifier. 
In contrast,  \citet{heidari2019longterm} and \citet{zhang2020longterm} focus more on long-term impact instead of classification fairness.
To be specific, \citet{heidari2019longterm} studies how fairness intervention affects the different groups in terms of evenness, centralization, and clustering by simulating the population's response through effort. 
\citet{zhang2020longterm} investigates how different fairness policies affect the gap between the qualification rates of different groups under a partially observed Markov decision process.
Besides, there are a few works which study how individuals may take strategic actions to improve their outcomes given a classifier~\citep{chen2021effort}. 
However, \citet{chen2021effort} aims to address this problem by designing an optimization problem that is robust to strategic manipulation, which is orthogonal to our focus. 
\vspace{-1mm}

\vspace{-.18in}
\section{Conclusion}
\vspace{-.15in}
In this paper, we introduce Equal Improvability (EI), a group fairness notion that equalizes the potential acceptance of rejected samples in different groups when appropriate effort is made by the rejected samples. 
We analyze the properties of EI and provide three approaches to finding a classifier that achieves EI. While our experiments demonstrate the effectiveness of the proposed approaches in reducing EI disparity, the theoretical analysis of the approximated EI penalties remains open. 
Moreover, we formulate a simplified dynamic model with one-dimensional features and a binary sensitive attribute to showcase the benefits of EI in promoting equity in feature distribution across different groups. 
We identify extending this model to settings with multiple sensitive attributes and high-dimensional features as an interesting future direction.
\section*{Acknowledgement}
Yuchen Zeng was supported in part by NSF Award DMS-2023239.
Ozgur Guldogan and Ramtin Pedarsani were supported by NSF Award CNS-2003035, and NSF Award CCF-1909320.
Kangwook Lee was supported by NSF/Intel Partnership on Machine Learning for Wireless Networking Program under Grant No. CNS-2003129 and by the Understanding and Reducing Inequalities Initiative of the University of Wisconsin-Madison, Office of the Vice Chancellor for Research and Graduate Education with funding from the Wisconsin Alumni Research Foundation.
The authors would also like to thank the anonymous reviewers and AC for their valuable suggestions.

\bibliography{_ei_fairness}
\bibliographystyle{iclr2023_conference}

\newpage
\appendix
\section{Theoretical Results}

\subsection{Connections between EI, DP, and BE}\label{appendix:proof}
In this section, we provide the proof of Theorem~\ref{lemma:notion_relationship_main} and Corollary~\ref{corollary:ei_be}.
\begin{proof}[Proof of Theorem~\ref{lemma:notion_relationship_main}]
All we need to prove are three statements: 
\begin{enumerate}
    \item Prove that EI and BE imply DP
    \item Prove that DP and EI imply BE
    \item Prove that BE and DP imply EI
\end{enumerate}
Below we prove each statement.

\paragraph{1. EI, BE $\Rightarrow$ DP}Suppose a classifier $f$ achieves EI and BE.
Recall that a classifier achieves EI if
\begin{equation}
\sP\left(\max_{\mu(\Delta \ix) \leq \delta} f(\rvx + \Delta \rvx)\hspace{-0.5mm} \ge \hspace{-0.5mm} 0.5 \mid f(\rvx)  \hspace{-0.5mm} <  \hspace{-0.5mm} 0.5, \ervz = z \hspace{-0.5mm} \right) \hspace{-0.5mm} = \hspace{-0.5mm} \sP\left(\max_{\mu(\Delta \ix) \leq \delta} f(\rvx + \Delta \rvx) \hspace{-0.5mm} \ge  \hspace{-0.5mm} 0.5 \mid f(\rvx) \hspace{-0.5mm} < \hspace{-0.5mm} 0.5 \hspace{-0.5mm} \right) \label{eq:ei1}   
\end{equation}
and a classifier achieves BE if
\begin{equation}
   \sP\left(\max_{\mu(\Delta \ix) \leq \delta} f(\rvx + \Delta \rvx)\hspace{-0.5mm} \ge \hspace{-0.5mm} 0.5, f(\rvx)  \hspace{-0.5mm} <  \hspace{-0.5mm} 0.5 \mid \ervz = z \hspace{-0.5mm} \right) \hspace{-0.5mm} = \hspace{-0.5mm} \sP\left(\max_{\mu(\Delta \ix) \leq \delta} f(\rvx + \Delta \rvx) \hspace{-0.5mm} \ge \hspace{-0.5mm} 0.5, f(\rvx) \hspace{-0.5mm} < \hspace{-0.5mm} 0.5 \hspace{-0.5mm} \right) \label{eq:be1} 
\end{equation}
By dividing both sides of \ref{eq:be1} by the both sides of \ref{eq:ei1}, we have
\begin{equation}
\frac{\sP\left(\max\limits_{\mu(\Delta \ix) \leq \delta} f(\rvx + \Delta \rvx)\hspace{-0.5mm} \ge \hspace{-0.5mm} 0.5, f(\rvx)  \hspace{-0.5mm} <  \hspace{-0.5mm} 0.5 \mid \ervz = z \hspace{-0.5mm} \right) \hspace{-0.5mm}}{\sP\left(\max\limits_{\mu(\Delta \ix) \leq \delta} f(\rvx + \Delta \rvx) \ge  0.5 \mid f(\rvx)   <   0.5, \ervz = z \right)} = \frac{\hspace{-0.5mm} \sP\left(\max\limits_{\mu(\Delta \ix) \leq \delta} f(\rvx + \Delta \rvx) \hspace{-0.5mm} \ge  \hspace{-0.5mm} 0.5, f(\rvx) \hspace{-0.5mm} < \hspace{-0.5mm} 0.5 \hspace{-0.5mm} \right)}{\sP\left(\max\limits_{\mu(\Delta \ix) \leq \delta} f(\rvx + \Delta \rvx) \ge  0.5 \mid f(\rvx)   <   0.5\right)}
\end{equation}
Then, it can be simplified as
$$
\sP\left(f(\rvx)<0.5 \mid \ervz=z\right) = \sP\left(f(\rvx)< 0.5 \right),
$$
which implies that the classifier achieves demographic parity,
$$
\sP\left(f(\rvx) \ge 0.5 \mid\ervz=z\right) = \sP\left(f(\rvx) \ge 0.5\right)
$$

\paragraph{2. DP, EI $\Rightarrow$ BE}Suppose a classifier $f$ achieves DP and EI.
Recall that a classifier achieves DP if
\begin{equation}
   \sP\left(f(\rvx) \ge 0.5 \mid\ervz=z\right) = \sP\left(f(\rvx) \ge 0.5\right),
\end{equation}
which implies
\begin{equation}
   \sP\left(f(\rvx) < 0.5 \mid\ervz=z\right) = \sP\left(f(\rvx) < 0.5\right). \label{eq:dp} 
\end{equation}
Recall that a classifier achieves EI if
\begin{equation}
   \sP\left(\max_{\mu(\Delta \ix) \leq \delta} f(\rvx + \Delta \rvx)\hspace{-0.5mm} \ge \hspace{-0.5mm} 0.5 \mid f(\rvx)  \hspace{-0.5mm} <  \hspace{-0.5mm} 0.5, \ervz = z \hspace{-0.5mm} \right) \hspace{-0.5mm} = \hspace{-0.5mm} \sP\left(\max_{\mu(\Delta \ix) \leq \delta} f(\rvx + \Delta \rvx) \hspace{-0.5mm} \ge  \hspace{-0.5mm} 0.5 \mid f(\rvx) \hspace{-0.5mm} < \hspace{-0.5mm} 0.5 \hspace{-0.5mm} \right) \label{eq:ei} 
\end{equation}
By multiplying both sides of \ref{eq:dp} and \ref{eq:ei}, we have
\begin{multline}
    \sP\left(\max_{\mu(\Delta \ix) \leq \delta} f(\rvx + \Delta \rvx) \ge  0.5 \mid f(\rvx)   <   0.5, \ervz = z \right) \sP\left(f(\rvx)< 0.5 \mid\ervz=z\right) \\
    = \sP\left(\max_{\mu(\Delta \ix) \leq \delta} f(\rvx + \Delta \rvx) \ge  0.5 \mid f(\rvx)  <  0.5\right)\sP\left(f(\rvx)<0.5\right)
\end{multline}
Then, it can be simplified as
\begin{equation}
    \sP\left(\max_{\mu(\Delta \ix) \leq \delta} f(\rvx + \Delta \rvx)\hspace{-0.5mm} \ge \hspace{-0.5mm} 0.5, f(\rvx)  \hspace{-0.5mm} <  \hspace{-0.5mm} 0.5 \mid \ervz = z \hspace{-0.5mm} \right) \hspace{-0.5mm} = \hspace{-0.5mm} \sP\left(\max_{\mu(\Delta \ix) \leq \delta} f(\rvx + \Delta \rvx) \hspace{-0.5mm} \ge  \hspace{-0.5mm} 0.5, f(\rvx) \hspace{-0.5mm} < \hspace{-0.5mm} 0.5 \hspace{-0.5mm} \right),
\end{equation}
which implies that the classifier $f$ achieves BE.

\paragraph{3. BE, DP $\Rightarrow$ EI}
Suppose a classifier $f$ achieves BE and DP.
Recall that a classifier achieves DP if
\begin{equation}
   \sP\left(f(\rvx) \ge 0.5 \mid\ervz=z\right) = \sP\left(f(\rvx) \ge 0.5\right),
\end{equation}
which implies
\begin{equation}
   \sP\left(f(\rvx) < 0.5 \mid\ervz=z\right) = \sP\left(f(\rvx) < 0.5\right) \label{eq:dp2} 
\end{equation}
Recall that a classifier achieves BE if
\begin{equation}
   \sP\left(\max_{\mu(\Delta \ix) \leq \delta} f(\rvx + \Delta \rvx)\hspace{-0.5mm} \ge \hspace{-0.5mm} 0.5, f(\rvx)  \hspace{-0.5mm} <  \hspace{-0.5mm} 0.5 \mid \ervz = z \hspace{-0.5mm} \right) \hspace{-0.5mm} = \hspace{-0.5mm} \sP\left(\max_{\mu(\Delta \ix) \leq \delta} f(\rvx + \Delta \rvx) \hspace{-0.5mm} \ge  \hspace{-0.5mm} 0.5, f(\rvx) \hspace{-0.5mm} < \hspace{-0.5mm} 0.5 \hspace{-0.5mm} \right). \label{eq:be} 
\end{equation}
By dividing both sides of \ref{eq:be} by the both sides of \ref{eq:dp2}, we have 
\begin{equation}
\frac{\sP\left(\max\limits_{\mu(\Delta \ix) \leq \delta} f(\rvx + \Delta \rvx)\hspace{-0.5mm} \ge \hspace{-0.5mm} 0.5, f(\rvx)  \hspace{-0.5mm} <  \hspace{-0.5mm} 0.5 \mid \ervz = z \hspace{-0.5mm} \right) \hspace{-0.5mm}}{\sP\left(f(\rvx)< 0.5 \mid\ervz=z\right)} = \frac{\hspace{-0.5mm} \sP\left(\max\limits_{\mu(\Delta \ix) \leq \delta} f(\rvx + \Delta \rvx) \hspace{-0.5mm} \ge  \hspace{-0.5mm} 0.5, f(\rvx) \hspace{-0.5mm} < \hspace{-0.5mm} 0.5 \hspace{-0.5mm} \right)}{\sP\left(f(\rvx)<0.5\right)}
\end{equation}
Then, it can be simplified as
\begin{equation}
   \sP\left(\max_{\mu(\Delta \ix) \leq \delta} f(\rvx + \Delta \rvx)\hspace{-0.5mm} \ge \hspace{-0.5mm} 0.5 \mid f(\rvx)  \hspace{-0.5mm} <  \hspace{-0.5mm} 0.5, \ervz = z \hspace{-0.5mm} \right) \hspace{-0.5mm} = \hspace{-0.5mm} \sP\left(\max_{\mu(\Delta \ix) \leq \delta} f(\rvx + \Delta \rvx) \hspace{-0.5mm} 
   \ge \hspace{-0.5mm} 0.5 \mid f(\rvx) \hspace{-0.5mm} < \hspace{-0.5mm} 0.5 \hspace{-0.5mm} \right),
\end{equation}
which implies that the classifier $f$ achieves EI.
\end{proof}

\begin{proof} [Proof of Corollary~\ref{corollary:ei_be}]
The Corollary~\ref{corollary:ei_be} can be proved directly from Theorem~\ref{lemma:notion_relationship_main}.
\end{proof}

\subsection{Connections between EI and ER}\label{app:ei_vs_er}
In this part, we investigate the connections between EI and ER. 
\begin{lemma}
Consider $\rx \mid \rz = z \sim \gN (\mu_z, \sigma^2)$ for $z \in \set{0,1}$, $\mu_z, \sigma \in \sR$, and classifiers characterized by two accepting thresholds $(\tau_0, \tau_1)$, where $\tau_0, \tau_1\in\sR$. 
If a classifier satisfies EI, then it satisfies ER. 
\end{lemma}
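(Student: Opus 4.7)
The plan is to reduce both EI and ER to conditions on the standardized thresholds $t_z := (\tau_z - \mu_z)/\sigma$, to show that EI forces $t_0 = t_1$, and then to verify that the ER quantity depends on $z$ only through $t_z$, so that equality of the $t_z$'s immediately yields ER.

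First I would rewrite EI in this setup. Since the classifier predicts $\haty = 1$ iff $\rx \ge \tau_z$, and the only norm on $\sR$ is (up to a positive scalar) the absolute value, the event $\max_{\mu(\Delta \rx) \le \delta} f(\rx + \Delta \rx) \ge 0.5$ is the same as $\rx \ge \tau_z - \delta$. Using $\rx \mid \rz = z \sim \gN(\mu_z, \sigma^2)$ and setting $c := \delta/\sigma$, a short calculation reduces the EI condition to
\begin{equation}
g(t_z) := \Phi(t_z - c)/\Phi(t_z) \text{ being independent of } z.
\end{equation}

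The key step will be showing that $g$ is strictly monotone on $\sR$, which would force $t_0 = t_1$, i.e., $\tau_0 - \mu_0 = \tau_1 - \mu_1$. I would compute
\begin{equation}
(\log g)'(t) = h(t-c) - h(t), \qquad h(u) := \phi(u)/\Phi(u),
\end{equation}
and invoke the classical fact that the inverse Mills ratio $h$ is strictly decreasing on $\sR$ (which itself follows from $h'(u) = -h(u)(u + h(u)) < 0$, combined with the standard bound $\Phi(u) < \phi(u)/|u|$ for $u < 0$). Since $c > 0$, this gives $(\log g)'(t) > 0$, so $g$ is strictly increasing and in particular injective, whence $g(t_0) = g(t_1)$ yields $t_0 = t_1$.

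For ER, a rejected individual with $x < \tau_z$ has minimum required effort $\tau_z - x$, so the ER quantity for group $z$ equals $\E[\tau_z - \rx \mid \rx < \tau_z, \rz = z]$. The standard truncated-Gaussian mean formula yields
\begin{equation}
\E[\tau_z - \rx \mid \rx < \tau_z, \rz = z] = \sigma\bigl(t_z + h(t_z)\bigr),
\end{equation}
a function of $t_z$ alone; combined with $t_0 = t_1$ from the EI step, this immediately gives ER. The main obstacle is the monotonicity step, which rests on the well-known monotonicity of the inverse Mills ratio; all remaining steps are routine Gaussian calculations.
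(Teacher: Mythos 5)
Your proposal is correct and follows essentially the same route as the paper's own proof: reduce EI to equality of the ratio $\Phi(t-\delta/\sigma)/\Phi(t)$ evaluated at the standardized thresholds, deduce $\tau_0-\mu_0=\tau_1-\mu_1$ from strict monotonicity of that ratio via the monotone (inverse) Mills ratio, and then observe that the ER quantity depends on each group only through $\tau_z-\mu_z$. The only cosmetic differences are that you prove the Mills-ratio monotonicity from scratch (via $h'(u)=-h(u)(u+h(u))$ and the Gaussian tail bound) where the paper cites it, and you use the explicit truncated-Gaussian mean formula where the paper uses a change of variables.
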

\begin{proof}
Here we use $\Phi = 1-Q$ and $\phi$ to denote the CDF and PDF of standard Gaussian distribution, respectively. We consider the cost function $\mu = |\cdot|$.

Recall the definition of EI disparity and ER disparity
\begin{align}
    \text{EI Disparity} &= \Big|\sP\Big(\underbrace{\max_{\mu(\Delta \rx) < \delta} f(\rx + \Delta \rx) > 0.5}_{\rx > \tau_0-\delta} \mid \underbrace{f(\rx) < 0.5}_{\rx \leq \tau_0}, \rz = 0\Big) , \\ 
    & \quad \quad \quad \quad \quad \quad \quad \quad \quad \quad \quad \quad \quad  - \sP\left(\max_{\mu(\Delta \rx) < \delta} f(\rx + \Delta \rx) > 0.5 \mid f(\rx) < 0.5, \rz = 1\right) \Big| \\
    \text{ER Disparity} &= \Big| \E \Big[\underbrace{\min_{f(\rx + \Delta \rx) \geq 0.5} \mu(\Delta \rx )}_{\tau_0 - \rx} \mid \underbrace{f(\rx) < 0.5}_{\rx \leq \tau_0}, \rz = 0 \Big] \\
    & \quad \quad \quad \quad \quad \quad \quad \quad \quad \quad \quad \quad \quad \quad \quad \quad - \E \Big[\min_{f(\rx + \Delta \rx) \geq 0.5} \mu(\Delta \rx) \mid f(\rx) < 0.5, \rz = 1 \Big] \Big|. \\
\end{align}
Consequently, the EI constraint and ER constraint can be written as 
\begin{align}
    & \text{EI Disparity}~(\tau_0, \tau_1) = \left|\Phi\left(\frac{\tau_0 - \delta -\mu_0}{\sigma}\right)/\Phi\left(\frac{\tau_0 - \mu_0}{\sigma}\right) - \right. \\
     & \quad \quad \quad \quad \quad \quad \quad \quad \quad \quad \quad \quad \quad \quad \quad \quad \quad \left. \Phi\left(\frac{\tau_1 - \delta - \mu_1}{\sigma}\right) / \Phi\left(\frac{\tau_1 - \mu_1}{\sigma}\right) \right| = 0,  \label{eq:ei__}\\ 
    & \text{ER Disparity}~(\tau_0, \tau_1) = \left| \frac{1}{\Phi((\tau_0-\mu_0)/\sigma)}\int_{-\infty}^{\tau_0} (\tau_0 - t) \phi\left(\frac{t-\mu_0}{\sigma}\right) \rd t - \right.\\ 
    & \quad \quad \quad \quad \quad \quad \quad \quad \quad \quad \quad \quad \quad \quad \quad \quad \left. \frac{1}{\Phi((\tau_1-\mu_1)/\sigma)} \int_{-\infty}^{\tau_1} (\tau_1 - t) \phi\left(\frac{t-\mu_1}{\sigma}\right) \rd t \right|  = 0 \label{eq:er}\\
\end{align}
In this proof, we will first show that achieving EI is equivalent to $\tau_0 - \mu_0 = \tau_1 - \mu_1$, and then show that the classifier with $\tau_0 - \mu_0 = \tau_1 - \mu_1$ satisfies the ER constraint. 

\begin{enumerate}
    \item     EI constraint. 
    
    Let $\varphi(x) = \Phi(\frac{x - \delta}{\sigma})/\Phi(\frac{x}{\sigma})$. 
    First, we show that $\varphi$ is a strictly increasing function.
    Note that 
    \begin{equation}
        \varphi'(x) = \frac{1}{\sigma\Phi\left(\frac{x}{\sigma}\right)^2}\left(\phi\left(\frac{x- \delta}{\sigma}\right) \Phi\left(\frac{x}{\sigma}\right) - \Phi\left(\frac{x-\delta}{\sigma}\right)\phi\left(\frac{x}{\sigma}\right) \right) .
    \end{equation}
    Therefore, to show that $\varphi$ is strictly increasing, it is sufficient to show that
    \begin{equation}\label{eq:er_1}
        \phi\left(\frac{x- \delta}{\sigma}\right) \Phi\left(\frac{x}{\sigma}\right)  > \Phi\left(\frac{x-\delta}{\sigma}\right)\phi\left(\frac{x}{\sigma}\right). 
    \end{equation}
    We show that \eqref{eq:er_1} is equivalent as the following inequality by dividing both the left-hand side and right-hand side by $\phi(\frac{x- \delta}{\sigma})\phi(\frac{x}{\sigma})$:  
    \begin{equation}\label{eq:er_22}
        \Phi\left(\frac{x}{\sigma}\right)/\phi\left(\frac{x}{\sigma}\right) > \Phi\left(\frac{x-\delta}{\sigma}\right)/\phi\left(\frac{x- \delta}{\sigma}\right) .
    \end{equation}
    Note that  $\frac{1-\Phi(\cdot)}{\phi(\cdot)}$ is known in literatures as Mills' ratio~\citep{mitrinovic1970analytic}, which is strictly decreasing on $\sR$. 
    Therefore, $\frac{\Phi(\cdot)}{\phi(\cdot)}$ is strictly increasing on $\sR$. 
    Since $\frac{x}{\sigma} > \frac{x-\delta}{\sigma}$, \eqref{eq:er_22} holds, thereby \eqref{eq:er_1} holds and $\varphi$ is strictly increasing. 
    
    Given that $\varphi(x) = \Phi(\frac{x - \delta}{\sigma})/\Phi(\frac{x}{\sigma})$ is a strictly increasing function on $\sR$, 
    \begin{equation}
        \eqref{eq:ei__} = \left| \varphi(\tau_0-\mu_0) - \varphi(\tau_1 - \mu_1) \right| = 0
    \end{equation}
    yields that 
    \begin{equation}
        \tau_0 - \mu_0 = \tau_1 - \mu_1. 
    \end{equation}

    \item ER constraint. 

    We first note that 
    \begin{equation}
         \int_{-\infty}^{\tau_0} (\tau_0 - t) \phi\left(\frac{t-\mu_0}{\sigma} \right) \rd t \overset{t' = t - \mu_0}{=\joinrel=\joinrel=\joinrel=\joinrel=\joinrel=\joinrel=}   \int_{-\infty}^{\tau_0-\mu_0} (\tau_0 - \mu_0 -t' ) \phi\left(\frac{t'}{\sigma}\right) \rd t'
    \end{equation}
    Let $\psi(x) = \frac{1}{\Phi(x/\sigma)}  \int_{-\infty}^{x} (x-t) \phi(\frac{t}{\sigma}) \rd t $. 
    It is clear that ER constraint is equivalent to 
    \begin{equation}\label{eq:er_3}
        \eqref{eq:er} = \left| \psi(\tau_0-\mu_0) - \psi(\tau_1 - \mu_1) \right|. 
    \end{equation}

    Therefore, the classifier with $\tau_0 - \mu_0 = \tau_1 - \mu_1$ clearly satisfies the ER constraint. 
\end{enumerate}

Combining all the discussion above completes the proof. 
\end{proof}

\section{Supplementary Materials on the Fairness Notions}
Recall that in Sec.~\ref{sec:prob_set} and Sec.~\ref{sec:alg}, this paper proposes a new fairness notion called equal improvability (EI), compares it with other existing effort-based fairness notions, and finds a classifier by solving a EI-constrained optimization which is formulated as a minimax problem. 
In Sec.~\ref{sec:EI_def_detail}, we first explain what each term in the definition of EI means. 
Then, we provide more details of reward function selection of bounded effort (BE)~\citep{heidari2019longterm} and discuss the vulnerability of equal recourse (ER)~\citep{gupta2019equalizing} in Sec.~\ref{app:specification_be} and \ref{app:analysis_ei_er}, respectively
Then in Sec.~\ref{appendix:sol_max}, we provide how we solved the inner maximization problem in the EI-constrained optimization. Finally, in Sec.~\ref{appendix:optimal tradeoff}, we provide numerical methods for finding the optimal solution for the EI-constrained problem, under simple synthetic dataset setting.  

\subsection{Meaning of each term in the definition of EI}\label{sec:EI_def_detail}
To help readers better understand our EI definition, here we explain what each term means in EI definition means. 
Let the data sample $(\rvx, \ervy, \ervz)$ follows the distribution $\gP_{(\rvx, \ervy, \ervz)}$. The meaning of each term of EI defined in Def.~\ref{def:EI} is detailed below. 
\begin{align}
    \mathbb{P}_{\underbrace{\mathbf{x},\mathrm{y},\mathrm{z} \sim\mathcal{P}_{\mathbf{x},\mathrm{y},\mathrm{z}}}_{\substack{\text{Randomness is over}\\ \text{the data distribution}}}} \biggl(\underbrace{\underbrace{\max_{\mu(\Delta \mathbf{x}_I) \leq \delta} f(\mathbf{x} + \Delta \mathbf{x})}_{\substack{\text{Maximum score}\\ \text{after improvement}}} \ge 0.5}_{\substack{\text{Event in which the sample can}\\ \text{be accepted after improvement}}}\bigg\vert \underbrace{f(\mathbf{x})<0.5 , \mathrm{z} = z}_{\substack{\text{Event in which the sample}\\ \text{comes from group } z\\ \text{and gets rejected}}} \biggr) 
\\ = \mathbb{P}_{\underbrace{\mathbf{x},\mathrm{y},\mathrm{z}\sim\mathcal{P}_{\mathbf{x},\mathrm{y},\mathrm{z}}}_{\substack{\text{Randomness is over}\\ \text{the data distribution}}}} \biggl(\underbrace{\underbrace{\max_{\mu(\Delta \mathbf{x}_I) \leq \delta} f(\mathbf{x} + \Delta \mathbf{x})}_{\substack{\text{Maximum score}\\ \text{after improvement}}} \ge 0.5}_{\substack{\text{Event in which the sample can}\\ \text{be accepted after improvement}}}\bigg\vert\underbrace{f(\mathbf{x})<0.5 }_{\substack{\text{Event in which the}\\ \text{sample gets rejected}}} \biggr),
\end{align}

\subsection{Specifying the reward function for BE}\label{app:specification_be}
In this part, we introduce how we set the reward function so that we can connect BE with EI. 
Recall that the reward function in \citet{heidari2019longterm} is defined as the benefit gained by changing an individual's characteristics from $w=(\rvx,y)$ to $w'=(\rvx',y')$:
\begin{equation}
    \mathcal{R}(w,w')=b(f(\rvx'),y')-b(f(\rvx),y)
\end{equation}
where $b$ is the benefit function, and $\rvx'=\rvx+\Delta \rvx$ is the updated feature. If we set the benefit function as $b(f(\rvx),y)=\mathbf 1\{f(\rvx)\geq 0.5\}$, then the reward function becomes:
\begin{equation}
    \mathcal{R}(w,w')=\mathbf 1\{f(\rvx+\Delta \rvx)\geq0.5\}-\mathbf 1\{f(\rvx)\geq0.5\}
\end{equation}
Then, the bounded effort fairness is defined as:
\begin{equation}
    \mathbb{E}\left[\max_{\Delta \rvx}\mathcal{R}(w,w') \text{ s.t. } \mu(\Delta \rvx)<\delta|\ervz=z\right]=\mathbb{E}\left[\max_{\Delta \rvx}\mathcal{R}(w,w') \text{ s.t. } \mu(\Delta \rvx)<\delta\right] \text{ for all } z
\end{equation}
Consequently,
\begin{equation}
    \left(\max_{\Delta \rvx}\mathcal{R}(w,w') \text{ s.t. } \mu(\Delta \rvx)<\delta\right) = \begin{cases} 1 \text{ if } \max_{\mu(\Delta \rvx)<\delta}f(\rvx+\Delta \rvx)\geq0.5 \text{ and } f(\rvx)<0.5 \\ 0 \text{ otherwise}\end{cases}
\end{equation}
Therefore, we can write expectation as probability:
\begin{equation}
    P\left(\max\limits_{\mu(\Delta \rvx) \leq \delta}f(\rvx+\Delta \rvx)\geq0.5, f(\rvx) < 0.5 \mid  \ervz = z\right) = P\left(\max\limits_{\mu(\Delta \rvx) \leq \delta}f(\rvx+\Delta \rvx)\geq0.5, f(\rvx) < 0.5 \right),
\end{equation}
which is in Table~\ref{tab:comparison}. 

\subsection{Vulnerability of ER to outliers}\label{app:analysis_ei_er}
As we mentioned in Table~\ref{tab:comparison}, ER is vulnerable to outliers. 
Here we provide a simple analysis. 
Suppose an outlier, having feature-attribute pair $(\rvx,\rz)$, is added to the dataset with $n$ samples. Let the outlier is misqualified $f(\rvx) < 0.5$, and requires effort $\mu(\Delta \rvx) = M$ to achieve $f(\rvx+\Delta \rvx) \ge 0.5$. 
In such case, the EI disparity increases at most $\frac{1}{n}$ since EI measures the \emph{portion} of samples with improved outcomes after making efforts. 
On the other hand, the ER disparity increases  by $\frac{M}{n}$, since ER measures the minimum required efforts \emph{averaged} over all samples. 
Note that a single outlier with a large $M$ can significantly increase the ER disparity, which does not happen for EI. 
Thus, one can observe that EI is much more robust to outliers, compared with ER. 
We conduct a numerical experiment in Sec.~\ref{app:outlier} to further highlight the robustness of EI to outliers. 

\subsection{Solving the inner maximization problem for EI}\label{appendix:sol_max}
As explained in Sec.\ref{sec:alg}, finding a EI classifier can be formulated as a minimax problem~\eqref{prb:opt}, where solving the inner maximization problem $\max_{\norm{\Delta \ix} \leq \delta}f(\rvx+\Delta \rvx)$ is required to compute $U_{\delta}$ in the regularization term, and the outer problem is the regularized-loss minimization finding the optimal model parameter $\bfw$ for the classifier $f = f_{\bfw}$.
In this section, we provide two ways of solving the inner maximization problem. 
In particular, in Sec.~\ref{sec:glm} we give the explicit expression of the optimizer $\Delta \ix$ when generalized linear model is considered. 
In Sec.~\ref{sec:at}, we solve the problem under a more general setting via adversarial training. 

\subsubsection{Closed-form solution for generalized linear model}\label{sec:glm}
Consider a Generalized Linear Model (GLM) written as $f(\rvx) = g^{-1}(\bfw^\top \rvx),$ where $g: [0,1] \to \sR$ is a strictly increasing link function, and $\bfw$ is the model parameter.
Denote the weights corresponding to $\ix$ as $\bfw_{\mathrm{I}}$.
Then, the inner maximization problem can be written as: 
\begin{align}\label{opt:inner}
    \max_{\norm{\Delta \ix} \leq \delta} f(\rvx + \Delta \rvx) & = \max_{\norm{\Delta \ix}\leq \delta} g^{-1}(\bfw^\top(\rvx + \Delta \rvx)) \\& 
    = \max_{\norm{\Delta \ix}\leq \delta} g^{-1}(\bfw^\top\rvx + \bfw_{\mathrm{I}}^\top \Delta \ix ) \quad \quad(\because \Delta \rvx = (\Delta \rvx_I, 0, 0))  \\
    & =g^{-1}\left(\bfw^\top\rvx + \max_{\norm{\Delta \ix}\leq \delta}  \bfw_{\mathrm{I}}^\top \Delta \ix\right) \quad(\because g\text{ is strictly increasing})
\end{align}
When $\norm{\cdot} = \norm{\cdot}_\infty$, the maximum is achieved by letting $\Delta\ix = \delta \operatorname{sign}(\bfw_{\mathrm{I}}),\bfw^\top\Delta\ix =  \delta \norm{\bfw_{\mathrm{I}}}_1$.
When $\norm{\cdot} = \norm{\cdot}_2$, the maximum can be achieved by letting $\Delta\ix = \delta\bfw_{\mathrm{I}}/\norm{\bfw_{\mathrm{I}}}_2,\bfw^\top\Delta\ix=\delta\norm{\bfw_{\mathrm{I}}}_2$.

\subsubsection{Adversarial training based approach for general setup}\label{sec:at}
Here we discuss how we solve the inner maximization problem under a more general setting. Following popular adversarial training methods, 
we apply projected gradient descent (PGD) for multiple times to update $\Delta \ix$, \ie set
\begin{equation}\label{update:delta}
    \Delta \ix = \gP(\Delta \ix + \gamma \nabla_{\Delta \ix} f(\rvx + \Delta \rvx)),
\end{equation}
where $\gamma>0$ is the step size, and $\gP$ is the projection onto the constrained space $\norm{\Delta \ix} \leq \delta$. 
For instance, $\gP$ is equivalent to the clipping process when we use $\ell_{\infty}$ norm. 
Denote the maximizer of the inner maximization problem as $\Delta \rvx^\star = (\Delta \ix^\star, \mathbf{0}, \mathbf{0})$. 
Then, from Danskin's theorem~\cite{danskin}, we have 
$\nabla_{\bfw} \max_{\norm{\Delta \ix} \leq \delta} f_{\bfw}(\rvx + \Delta \rvx) = \nabla_{\bfw} f_{\bfw}(\rvx + \Delta \rvx^\star).$
We can use this derivative to update $\bfw$ in the outer loss minimization problem. 
The pseudocode of this adversarial training based method is 
shown in Algorithm~\ref{alg:ef}.
\begin{algorithm}[H]
\SetKwInOut{Input}{Input}
\SetKwInOut{Output}{Output}
\Input{Dataset $\gD$}
\Output{Model parameter $\bfw$ for the classifier $f$.}
Initialize $\bfw$;\\
\For{each iteration}{
    \For{each $(\rvx_i, \ervy_i)\in \gD$}{
        Initialize $\Delta \rvx^{\star}_i$;\\
        \For{each PGD iteration}{
          Update $\Delta \rvx^\star_i$ according to (\ref{update:delta});
        }
    }
    Update $\bfw$ according to the regularized loss function defined in \eqref{prb:opt};
}
\caption{Pseudocode for achieving EI}
\label{alg:ef}
\end{algorithm}

\subsection{Derivation of optimal EI classifier for synthetic dataset}\label{appendix:optimal tradeoff}

This section shows how we obtain the optimal EI classifier (that minimizes the cost function in~\eqref{prb:opt}) for a synthetic dataset having two features $\rvx = [\ervx_1, \ervx_2]$ sampled from a Gaussian distribution $\mathcal{N}(\bm{\mu}_{\ervz,\ervy},\bm{\Sigma}_{\ervz,\ervy})$ where the mean $\bm{\mu}_{\ervz,\ervy}$ and the standard deviation $\bm{\Sigma}_{\ervz,\ervy}$ depends on the label $\ervy \in \{0,1\}$ and the group attribute $\ervz \in \{0,1\}$.
Note that the performance curve of the optimal EI classifier obtained in this section is provided in the yellow line in Fig.~\ref{fig:tradeoff_curves}. 

The optimal EI classifier is obtained in the following steps: 
(i) define mathematical notations used for analysis
(Sec.~\ref{sec:trade_off_p1}), (ii) compute the error probability (Sec.~\ref{sec:trade_off_p2}), (iii) compute EI disparity (Sec.~\ref{sec:trade_off_p3}), and (iv) solve the EI-regularized optimization problem and find the optimal EI classifier (Sec.~\ref{sec:trade_off_p4}). 

\subsubsection{Notations}\label{sec:trade_off_p1}

We consider finding a $\ervz$-aware linear classifier which predicts the label $\ervy$ from two features $\ervx_1, \ervx_2$ and one sensitive attribute $\ervz$. In other words, given $\rvx$ and $\ervz$, the output of a model is represented as 
$f(\rvx) = w_1\ervx_1+w_2\ervx_2+w_3\ervz+b$~\footnote{In this case, the decision boundary is $\{\rvx: f(\rvx) = 0\}$ instead of $\{\rvx: f(\rvx) = 0.5\}$ used in the main paper.
} where $[w_1, w_2, w_3]$ is the weight vector, $b$ is the bias. 

For group $\ervz=0$,
$$
\hat \ervy = \begin{cases}1 \ \ \text{ if } \ \ w_1\ervx_1+w_2\ervx_2>-b \\ 0 \ \ \text{else}\end{cases}
$$
For group $\ervz=1$,
$$
\hat \ervy = \begin{cases}1 \ \ \text{ if } \ \ w_1\ervx_1+w_2\ervx_2>-w_3-b \\ 0 \ \ \text{else}\end{cases}
$$
Without the loss of generality, let $\sqrt{w_1^2 + w_2^2} = 1$, and parameterize them as $w_1 = \sin \theta, w_2 = \cos \theta$.
Then, for group $\ervz=0$,
$$
\hat \ervy = \begin{cases}1 \ \ \text{ if } \ \ (\sin\theta)\ervx_1+(\cos\theta)\ervx_2>b_0 \\ 0 \ \ \text{else}\end{cases}
$$
and for group $\ervz=1$,
$$
\hat \ervy = \begin{cases}1 \ \ \text{ if } \ \ (\sin\theta)\ervx_1+(\cos\theta)\ervx_2>b_1 \\ 0 \ \ \text{else}\end{cases}
$$
where $b_0 = -b$ and $b_1 = -w_3 - b$.
Since the linear combination of multivariate Gaussian is a univariate Gaussian, we have  
\begin{equation}\label{eqn:w_theta}
\bm{w_\theta}^\top\bm{x}\sim\mathcal{N}(\bm w_\theta^\top\bm\mu_{\ervz,\ervy},\bm w_\theta^\top\bm\Sigma_{\ervz,\ervy}\bm w_\theta)
\end{equation}
where $\bm{w_\theta}=[\sin\theta,\cos\theta]$.
The decision rules can be written in terms of the $\bm w_\theta$. For group $\ervz=0$,
$$
\hat \ervy = \begin{cases}1 \ \ \text{ if } \ \ \bm{w_\theta}^\top\bm{x}>b_0 \\ 0 \ \ \text{else}\end{cases}
$$
For group $\ervz=1$,
$$
\hat \ervy = \begin{cases}1 \ \ \text{ if } \ \ \bm{w_\theta}^\top\bm{x}>b_1 \\ 0 \ \ \text{else}\end{cases}
$$

Now, the question is, what is the optimal parameters 
$\theta, b_0, b_1$ that solve the optimization problem in~\eqref{prb:opt}. In order to answer this question, we need to understand how the equal improvability condition is represented in terms of the model parameters.
Suppose we use 0-1 loss function $l$, and use $l_\infty$ norm $\mu(\rvx) = \lVert \rvx \rVert_{\infty}$.
From the result in Sec.~\ref{appendix:sol_max}, given the effort budget $\delta$, the maximum score improvement $(\max_{\mu(\Delta \mathbf{x}_I) \leq \delta} f(\mathbf{x} + \Delta \mathbf{x}) - f(\mathbf{x})) = \delta \lVert \vw_I \rVert_1 =  \delta(|\sin\theta|+|\cos\theta|)$ where $\vw_I$ is the weights for improvable features $\ervx_1, \ervx_2$.
Thus, if we denote the $\hat \ervy^{\max}$ as the estimated label after the improvement, we have 
$$
\hat \ervy^{\max} = \begin{cases}1 \ \ \text{ if } \ \ (\sin\theta) \ervx_1+(\cos\theta)\ervx_2>b_0-\delta(|\sin\theta|+|\cos\theta|)=b_0' \\ 0 \ \ \text{else}\end{cases}
$$
for group $\ervz=0$ and
$$
\hat \ervy^{\max} = \begin{cases}1 \ \ \text{ if } \ \ (\sin\theta)\ervx_1+(\cos\theta)\ervx_2>b_1-\delta(|\sin\theta|+|\cos\theta|)=b_1' \\ 0 \ \ \text{else}\end{cases}
$$
for group $\ervz=1$.

\subsubsection{Compute Error Probability}\label{sec:trade_off_p2}
The error probability can be written as,
$$
\Pr(\hat \ervy\neq \ervy) = \sum_{i=0}^1\Pr(\ervz=i)\Pr(\hat \ervy\neq \ervy|\ervz=i)
$$
We can derive the term $\Pr(\hat \ervy\neq \ervy|\ervz=0)$ as below:
\begin{align}
\Pr(\hat \ervy\neq \ervy|\ervz=0)= & \Pr(\ervy=0|\ervz=0)\Pr(\hat \ervy=1|\ervy=0,\ervz=0)\\&+\Pr(\ervy=1|\ervz=0)\Pr(\hat \ervy=0|\ervy=1,\ervz=0)
\end{align}
We can look each term $\Pr(\hat \ervy=1|\ervy=0,\ervz=0), \Pr(\hat \ervy=0|\ervy=1,\ervz=0)$ and write those terms in terms of Q-functions because,
$$
\Pr(\hat \ervy=1|\ervy=0,\ervz=0) = \Pr(\bm w_\theta^\top\rvx>b_0|\ervy=0,\ervz=0)
$$
$$
\Pr(\hat \ervy=0|\ervy=1,\ervz=0) = \Pr(\bm w_\theta^\top\rvx<b_0|\ervy=1,\ervz=0)
$$
From~\eqref{eqn:w_theta}, we have
$$
\Pr(\hat \ervy=1|\ervy=0,\ervz=0) = \Pr(\bm w_\theta^\top\rvx>b_0|\ervy=0,\ervz=0)=Q\left(\frac{b_0-\bm w_\theta^\top\bm\mu_{0,0}}{\sqrt{\bm w_\theta^\top\bm\Sigma_{0,0}\bm w_\theta}}\right)
$$
$$
\Pr(\hat \ervy=0|\ervy=1,\ervz=0) = \Pr(\bm{w_\theta}^\top\rvx<b_0|\ervy=1,\ervz=0)=Q\left(\frac{\bm w_\theta^\top\bm\mu_{1,0}-b_0}{\sqrt{\bm w_\theta^\top\bm\Sigma_{1,0}\bm w_\theta }}\right)
$$
One can derive the error rates for group $\ervz=1$ similarly. So, the total error rate can be written as
\begin{align}
\Pr(\hat \ervy\neq \ervy) = & \Pr(\ervz=0)\left[\Pr(\ervy=0|\ervz=0)Q\left(\frac{b_0-\bm w_\theta^\top\bm\mu_{0,0}}{\sqrt{\bm w_\theta^\top\bm\Sigma_{0,0}\bm w_\theta}}\right)\right. \\ & \left. +\Pr(\ervy=1|\ervz=0)Q\left(\frac{\bm w_\theta^\top\bm\mu_{1,0}-b_0}{\sqrt{\bm w_\theta^\top\bm\Sigma_{1,0}\bm w_\theta}}\right)\right]\\& +\Pr(\ervz=1)\left[\Pr(\ervy=0|\ervz=1)Q\left(\frac{b_1-\bm w_\theta^\top\bm\mu_{1,0}}{\sqrt{\bm w_\theta^\top\bm\Sigma_{1,0}\bm w_\theta}}\right)\right. \\ & \left. +\Pr(\ervy=1|\ervz=1)Q\left(\frac{\bm w_\theta^\top\bm\mu_{1,1}-b_1}{\sqrt{\bm w_\theta^\top\bm\Sigma_{1,1}\bm w_\theta}}\right)\right]
\end{align}

We have three parameters to optimize the error rate $\theta, b_0,b_1$. All the other terms are known. 

\subsubsection{Compute EI Disparity}\label{sec:trade_off_p3}

To compute EI disparity, we start with computing
\begin{equation}\label{eqn:error_prob}
\Pr(\hat \ervy^{\max}=1|\hat \ervy =0,\ervz=0)= \frac{\Pr(\hat \ervy^{\max}=1,\hat \ervy =0|\ervz=0)}{\Pr(\hat \ervy =0|\ervz=0)}
\end{equation}
The denominator of~\eqref{eqn:error_prob} can be expanded as
\begin{align}
\Pr(\hat \ervy=0|\ervz=0)= & \Pr(\ervy=0|\ervz=0)\Pr(\hat \ervy=0|\ervy=0,\ervz=0)\\&+\Pr(\ervy=1|\ervz=0)\Pr(\hat \ervy=0|\ervy=1,\ervz=0).
\end{align}
We can look each term $\Pr(\hat \ervy=0|\ervy=0,\ervz=0), \Pr(\hat \ervy=0|\ervy=1,\ervz=0)$ and write those terms in terms of Q-function because,
$$
\Pr(\hat \ervy=0|\ervy=0,\ervz=0) = \Pr(\bm w_\theta^\top\rvx<b_0|\ervy=0,\ervz=0)
$$
$$
\Pr(\hat \ervy=0|\ervy=1,\ervz=0) = \Pr(\bm w_\theta^\top\rvx<b_0|\ervy=1,\ervz=0)
$$
From~\eqref{eqn:w_theta}, we have
$$
\Pr(\hat \ervy=0|\ervy=0,\ervz=0) = \Pr(\bm w_\theta^\top\rvx<b_0|\ervy=0,\ervz=0)=Q\left(\frac{\bm w_\theta^\top\bm\mu_{0,0}-b_0}{\sqrt{\bm w_\theta^\top\bm\Sigma_{0,0}\bm w_\theta}}\right)
$$
$$
\Pr(\hat \ervy=0|\ervy=1,\ervz=0) = \Pr(\bm w_\theta^\top\rvx>b_0|\ervy=1,\ervz=0)=Q\left(\frac{\bm w_\theta^\top\bm\mu_{1,0}-b_0}{\sqrt{\bm w_\theta^\top\bm\Sigma_{1,0}\bm w_\theta}}\right)
$$
Then,
\begin{align}
\Pr(\hat \ervy=0|\ervz=0)= & \Pr(\ervy=0|\ervz=0)Q\left(\frac{\bm w_\theta^\top\bm\mu_{0,0}-b_0}{\sqrt{\bm w_\theta^\top\bm\Sigma_{0,0}\bm w_\theta}}\right)\\&+\Pr(\ervy=1|\ervz=0)Q\left(\frac{\bm w_\theta^\top\bm\mu_{1,0}-b_0}{\sqrt{\bm w_\theta^\top\bm\Sigma_{1,0}\bm w_\theta}}\right)
\end{align}
The numerator of~\eqref{eqn:error_prob} can be expanded as
\begin{align}
\Pr(\hat \ervy^{\max}=1,\hat \ervy =0|\ervz=0)= & \Pr(\ervy=0|\ervz=0)\Pr(\hat \ervy^{\max}=1,\hat \ervy=0|\ervy=0,\ervz=0)\\&+\Pr(\ervy=1|\ervz=0)\Pr(\hat \ervy^{\max}=1,\hat \ervy=0|\ervy=1,\ervz=0).
\end{align}
We can look each term $\Pr(\hat \ervy^{\max}=1,\hat \ervy=0|\ervy=0,\ervz=0), \Pr(\hat \ervy^{\max}=1,\hat \ervy=0|\ervy=1,\ervz=0)$ and write those terms in terms of Q-function because,
$$
\Pr(\hat \ervy^{\max}=1,\hat \ervy=0|\ervy=0,\ervz=0) = \Pr(b_0'<\bm w_\theta^\top\rvx<b_0|\ervy=0,\ervz=0)
$$
$$
\Pr(\hat \ervy^{\max}=1,\hat \ervy=0|\ervy=1,\ervz=0) = \Pr(b_0'<\bm w_\theta^\top\rvx<b_0|\ervy=1,\ervz=0)
$$
From~\eqref{eqn:w_theta}, we have
\begin{multline}
    \Pr(\hat \ervy^{\max}=1,\hat \ervy=0|\ervy=0,\ervz=0) = \Pr(b_0'<\bm w_\theta^\top\rvx<b_0|\ervy=0,\ervz=0)=\\ Q\left(\frac{\bm w_\theta^\top\bm\mu_{0,0}-b_0}{\sqrt{\bm w_\theta^\top\bm\Sigma_{0,0}\bm w_\theta}}\right)-Q\left(\frac{\bm w_\theta^\top\bm\mu_{0,0}-b_0'}{\sqrt{\bm w_\theta^\top\bm\Sigma_{0,0}\bm w_\theta}}\right)
\end{multline}
\begin{multline}
   \Pr(\hat \ervy^{\max}=1,\hat \ervy=0|\ervy=1,\ervz=0) = \Pr(b_0'<\bm w_\theta^\top\rvx<b_0|\ervy=1,\ervz=0) = \\ Q\left(\frac{\bm w_\theta^\top\bm\mu_{1,0}-b_0}{\sqrt{\bm w_\theta^\top\bm\Sigma_{1,0}\bm w_\theta}}\right)-Q\left(\frac{\bm w_\theta^\top\bm\mu_{1,0}-b_0'}{\sqrt{\bm w_\theta^\top\bm\Sigma_{1,0}\bm w_\theta}}\right)    
\end{multline}
Then,
\begin{align}
\Pr(\hat \ervy^{\max}=1,\hat \ervy=0|\ervz=0)& =  \Pr(\ervy=0|\ervz=0)\left[Q\left(\frac{\bm w_\theta^\top\bm\mu_{0,0}-b_0}{\sqrt{\bm w_\theta^\top\bm\Sigma_{0,0}\bm w_\theta}}\right)-Q\left(\frac{\bm w_\theta^\top\bm\mu_{0,0}-b_0'}{\sqrt{\bm w_\theta^\top\bm\Sigma_{0,0}\bm w_\theta}}\right)\right]\\&+\Pr(\ervy=1|\ervz=0)\left[Q\left(\frac{\bm w_\theta^\top\bm\mu_{1,0}-b_0}{\sqrt{\bm w_\theta^\top\bm\Sigma_{1,0}\bm w_\theta}}\right)-Q\left(\frac{\bm w_\theta^\top\bm\mu_{1,0}-b_0'}{\sqrt{\bm w_\theta^\top\bm\Sigma_{1,0}\bm w_\theta}}\right)\right]
\end{align}
It can be derived for group $\ervz=1$ similarly. So, we derived EI disparity in terms of $\theta, b_0, b_1$. 

\subsubsection{Solve the Optimization Problem}\label{sec:trade_off_p4}
In the previous two sections, we derived the error rate and EI disparity in terms of Q-functions containing parameters $\theta, b_0, b_1$.
Therefore, we can construct an EI-constrained optimization problem (which is essentially same as~\eqref{prb:opt}):
\begin{align}
    \min_{\theta,b_0,b_1} \quad & \Pr(\hat \ervy\neq \ervy) \\ \textrm{ s.t.} \quad & \max_{i\in\{0,1\}}\left| \Pr(\hat \ervy^{\max}=1|\hat \ervy=0,\ervz=i)-\Pr(\hat \ervy^{\max}=1|\hat \ervy=0)\right|<c
\end{align}
where $c$ is a hyperparameter we can choose to balance error rate and EI disparity.

After writing error rate and EI disparity in terms of Q-functions (using the derivations in Sec.~\ref{sec:trade_off_p2} and Sec.~\ref{sec:trade_off_p3}), we numerically solve the constrained optimization problems above with a popular python module \texttt{scipy.optimize}.
To get the experimental results in Fig.~\ref{fig:tradeoff_curves}, we numerically solved the above problem for 20 different $c$ values, where the maximum $c$ is picked as the EI disparity of the unconstrained optimization problem.

\section{Supplementary Materials on Experiments}\label{apd:exp}

In this section, we provide the details of the experiment setup and additional experimental results. 

\subsection{Synthetic dataset generation}\label{app:synth}
We define $\ervy, \ervz$ as $\ervz \sim \operatorname{Bern} (0.4), (\ervy \mid \ervz= 0 ) \sim \operatorname{Bern} (0.3)$, and  $(\ervy \mid \ervz = 1) \sim \operatorname{Bern} (0.5)$.
The feature $\rvx$ follows the conditional distribution $(\rvx \mid \ervy = y, \ervz = z) \sim \gN (\bm\mu_{y,z}, \bm\Sigma_{y,z})$ where the mean of each cluster is $$\bm\mu_{0,0} = [-0.1, -0.2], \bm\mu_{0,1} = [-0.2, -0.3], \bm\mu_{1,0} = [0.1, 0.4], \bm\mu_{1,1} = [0.4, 0.3]$$ and the covariance matrix of each cluster is $$\bm\Sigma_{0,0} = \begin{bmatrix}
0.4 & 0.0\\
0.0 & 0.4
\end{bmatrix},  \bm\Sigma_{1,0} = 
\bm\Sigma_{0,1} = \begin{bmatrix}
0.2 & 0.0\\
0.0 & 0.2
\end{bmatrix}, \bm\Sigma_{1,1} = \begin{bmatrix}
0.1 & 0.0\\
0.0 & 0.1
\end{bmatrix}.$$

\subsection{Hyperparameter selection}\label{app:hyper}
The selected hyperparameter for each experiment is provided in our anonymous Github. 
In all our experiments, we perform cross-validation to select the learning rate from $\set{0.0001, 0.001, 0.01, 0.1}$. In addition, for each penalty term we did two-step cross-validation to choose $\lambda$. In the first step, we used a set $\lambda \in \{0, 0.2, 0.4, 0.6, 0.8, 0.9\}$. In the second step, we generate a second set around the best $\lambda^{\star}$ found in the first step, \ie the second set is $\{\max \{\lambda^{\star} + \eps, 0\} : \eps \in \{-0.1, -0.05, 0, 0.05, 0.1\}\}$. For example, if $\lambda^{\star}=0.4$ is the best at the first step, then at the second step we use the set $\lambda \in \{0.3, 0.35, 0.4, 0.45, 0.5\}$. 

\subsection{Additional experimental results on algorithm evaluation}\label{app:algorithm_eval}

\begin{table*}[t]
\vspace{-.2in}
\caption{
\textbf{Comparison of error rate and EI disparities of ERM baseline and proposed methods on the synthetic, German Statlog Credit and ACSIncome-CA datasets on Multi-Layer Perceptron (MLP).}
For each dataset, we boldfaced the lowest EI disparity value. Compared with ERM, all three methods proposed in this paper enjoys much lower EI disparity without losing the accuracy much. 
All reported numbers are evaluated on the test set.
}
\label{results-table-mlp}
\vskip 0.2in
\vspace{-6mm}
\setlength\tabcolsep{3pt}
\begin{center}
\begin{scriptsize}
\begin{sc}
\begin{tabular}{p{1.8cm}|l|cccc}

\toprule
& & \multicolumn{4}{c}{Methods} \\
Dataset  & Metric & ERM & Covariance-based & KDE-based & Loss-based \\
\midrule
Synthetic & 
\begin{tabular}{@{}c@{}}Error Rate($\downarrow$)\\EI Disp.($\downarrow$)\end{tabular} & 
\begin{tabular}{@{}c@{}}$.205\pm.003$\\$.141\pm.036$\end{tabular} & 
\begin{tabular}{@{}c@{}}$.242\pm.006$\\$\bm{.004\pm.002}$\end{tabular} & 
\begin{tabular}{@{}c@{}}$.227\pm.008$\\$.011\pm.006$\end{tabular} & 
\begin{tabular}{@{}c@{}}$.229\pm.012$\\$.018\pm.009$\end{tabular} \\
\midrule
\begin{tabular}{@{}l@{}} German\\ Stat.\end{tabular} & 
\begin{tabular}{@{}c@{}}Error Rate($\downarrow$)\\EI Disp.($\downarrow$)\end{tabular} & 
\begin{tabular}{@{}c@{}}$.221\pm.010$\\$.059\pm.045$\end{tabular} & 
\begin{tabular}{@{}c@{}}$.299\pm.012$\\$\bm{.013\pm.025}$\end{tabular} & 
\begin{tabular}{@{}c@{}}$.232\pm.018$\\$.041\pm.025$\end{tabular} & 
\begin{tabular}{@{}c@{}}$.238\pm.035$\\$\bm{.013\pm.019}$\end{tabular} \\
\midrule
ACSIncome-CA & \begin{tabular}{@{}c@{}}Error Rate($\downarrow$)\\EI Disp.($\downarrow$)\end{tabular} & 
\begin{tabular}{@{}c@{}}$.181\pm.002$\\$.042\pm.002$\end{tabular} & 
\begin{tabular}{@{}c@{}}$.202\pm.002$\\$.010\pm.006$\end{tabular} & 
\begin{tabular}{@{}c@{}}$.182\pm.002$\\$.010\pm.002$\end{tabular} & 
\begin{tabular}{@{}c@{}}$.185\pm.001$\\$\bm{.006\pm.003}$\end{tabular} \\
\bottomrule
\end{tabular}
\end{sc}
\end{scriptsize}
\end{center}
\vskip -0.2in
\end{table*}

Table~\ref{results-table-mlp} shows the performance of ERM baseline and our three approaches (covariance-based, KDE-based, loss-based) introduced in Sec.~\ref{sec:alg}, for a multi-layer perceptron (MLP) network having one hidden layer with four neurons. Similar to the result in Table~\ref{results-table:all} for the logistic regression model, our methods can reduce the EI disparity without losing the classification accuracy (\ie increasing the error rate) much.

Meanwhile, according to a previous work~\citep{cherepanova2021technical}, large deep learning models are observed to overfit to fairness constraints during training and therefore produce unfair predictions on the test data. To confirm whether our method is also having such limitations, we investigate the performance of our algorithms on over-parameterized models. Specifically, we conduct experiments on a five-layer ReLU network with 200 hidden neurons per layer, which is over-parameterized for German dataset.
Table~\ref{results-table:dnn} reports the performance of EI-constrained classifiers on such over-parameterized setting. One can confirm that our methods (covariance-based, KDE-based, loss-based) perform well in both training and test dataset, and we do not observe the overfitting problem.

\begin{table*}[ht]
\caption{
\textbf{Error rate and EI disparities for ERM baseline and proposed methods, for an over-parameterized neural network on German Statlog Credit dataset.}
Performances on train/test dataset are presented. Note that we don't observe the overfitting issue in the over-parameterized setting.
}
\label{results-table:dnn}
\vskip 0.2in
\vspace{-6mm}
\setlength\tabcolsep{3pt}
\begin{center}
\begin{scriptsize}
\begin{sc}
\begin{tabular}{p{1.8cm}|l|cccc}
\toprule
& & \multicolumn{4}{c}{Methods} \\
Dataset  & Metric & ERM & Covariance-based & KDE-based & Loss-based \\
\midrule
\begin{tabular}{@{}l@{}} German\\ Stat.\end{tabular} & \begin{tabular}{@{}c@{}}Train Err.($\downarrow$)\\Test Err.($\downarrow$)
\\Train EI Disp.($\downarrow$)\\Test EI Disp.($\downarrow$)
\end{tabular} & \begin{tabular}{@{}c@{}}$.117\pm.004$\\$.117\pm.010$
\\$.022\pm.017$\\$.060\pm.032$
\end{tabular} & \begin{tabular}{@{}c@{}}$.133\pm.003$\\$.118\pm.010$
\\$.018\pm.011$\\$.049\pm.024$
\end{tabular}& \begin{tabular}{@{}c@{}}$.125\pm.008$\\$.121\pm.010$
\\$.018\pm.009$\\$.057\pm.028$
\end{tabular}& \begin{tabular}{@{}c@{}}$.132\pm.011$\\$.130\pm.009$
\\$.015\pm.013$\\$.047\pm.025$
\end{tabular}\\
\bottomrule
\end{tabular}
\end{sc}
\end{scriptsize}
\end{center}

\end{table*}

In addition, in Table~\ref{results-table-baseline}, we include ER and BE as baselines for the synthetic dataset experiment provided in Table~\ref{results-table:all}. 
We leverage the algorithm suggested by \citet{gupta2019equalizing} for mitigating ER disparity. 
We extend the loss-based approach to reduce BE disparity, by redefining the BE loss of group $z$ as 
$$
\tilde{L}_z^{\text{BE}} \triangleq \frac{1}{\text{number of samples in group }z} \sum_{i \in I_{-,z}} \ell (1, \max_{\lVert \Delta \mathbf{x}_{\text{I} i} \rVert \leq \delta} f(\mathbf{x}_i + \Delta \mathbf{x}_i)),
$$
where 
$$
\tilde{L}_z^{\text{EI}} \triangleq \frac{1}{\text{number of rejected samples in group }z} \sum_{i \in I_{-,z}} \ell (1, \max_{\lVert \Delta \mathbf{x}_{\text{I} i} \rVert \leq \delta} f(\mathbf{x}_i + \Delta \mathbf{x}_i)),
$$
and $I_{-,z}$ is the set of rejected samples in group $z$ for $z\in [Z]$. 

Table~\ref{results-table-baseline} shows that the minimum EI disparity is achieved by our methods. 
Hence, if the EI fairness needs to be achieved, then it cannot be replaced with the existing other fairness notions for some datasets. 

\begin{table*}[ht]
\caption{\textbf{Comparison of error rate and EI disparities of ERM, ER, and BE baseline and proposed methods on the synthetic dataset.}
We boldfaced the lowest EI disparity value.
The three EI-regularized approaches achieve the lowest EI disparity while maintaining low error rates.
All reported numbers are evaluated on the test set.}
\label{results-table-baseline}
\setlength\tabcolsep{3pt}
\begin{center}
\begin{tiny}
\begin{sc}
\begin{tabular}{p{1.2cm}|l|cccccc}
\toprule
& & \multicolumn{6}{c}{Methods} \\
Dataset  & Metric & ERM & \begin{tabular}{@{}c@{}}ER \\(\citet{gupta2019equalizing})\end{tabular} & \begin{tabular}{@{}c@{}}BE\\ (Loss-based)\end{tabular} & Covariance-based & KDE-based & Loss-based \\
\midrule
Synthetic &
\begin{tabular}{@{}c@{}}Error Rate($\downarrow$)\\EI Disp.($\downarrow$)\end{tabular} &
\begin{tabular}{@{}c@{}}$.221\pm.001$\\$.117\pm.007$\end{tabular} & \begin{tabular}{@{}c@{}}$.235\pm.009$\\$.036\pm.018$\end{tabular} &
\begin{tabular}{@{}c@{}}$.252\pm.006$\\$.006\pm.004$\end{tabular} &
\begin{tabular}{@{}c@{}}$.253\pm.003$\\$.003\pm.001$\end{tabular} &
\begin{tabular}{@{}c@{}}$.250\pm.001$\\$.005\pm.003$\end{tabular} &
\begin{tabular}{@{}c@{}}$.246\pm.001$\\$\bm{.002\pm.001}$\end{tabular} \\
\bottomrule
\end{tabular}
\end{sc}
\end{tiny}
\end{center}
\end{table*}

\subsection{Evaluating Robustness of EI}\label{app:exp:comparison}
In this section, we highlight the advantages of EI over BE and ER in terms of robustness to outliers and imbalanced negative prediction rates.  
In doing so, we consider certain data distributions and follow the method discussed in Sec.~\ref{appendix:optimal tradeoff} for solving the classifiers. 

\subsubsection{EI versus ER: robustness to outliers}\label{app:outlier}
As we discussed in Sec.~\ref{app:analysis_ei_er}, ER is vulnerable to outliers. 
In this experiment, we systematically study the robustness of EI and ER to outliers.

\paragraph{Data distributions} 
\textbf{(Clean)}
Let sensitive attribute $\ervz \sim \operatorname{Bern} (0.5)$ and label $\ervy \sim \operatorname{Bern} (0.5)$ be independent of sensitive attribute $\rz$.
Given the sensitive attribute $\rz$ and label $\ry$, feature $\rx$ follows the conditional distribution $\rvx \mid \ervy = y, \ervz = z \sim \gN (\bm\mu_{y,z}, \bm\Sigma_{y,z})$,
where the mean and covariance of the four Gaussian clusters are $$\bm\mu_{0,0} = [1, -6], \bm\mu_{0,1} = [-1, -2], \bm\mu_{1,0} = [2, 1.5], \bm\mu_{1,1} = [1, 2.5],$$  
and  
$$\bm\Sigma_{0,0} =
\bm\Sigma_{1,0} = 
\bm\Sigma_{0,1} =
\bm\Sigma_{1,1} = \begin{bmatrix}
0.25 & 0.0\\
0.0 & 0.25
\end{bmatrix}.$$
\textbf{(Contaminated)}
We contaminate the distribution by introducing additional 5\% outliers to group $\rz = 0$. 
The outliers follow Gaussian distribution with mean and covariance matrix
$$
\bm\mu_{\text{outlier},0} = [-1, -20], \bm\Sigma_{\text{outlier},0} = \begin{bmatrix}
0.05 & 0.0\\
0.0 & 0.05
\end{bmatrix}. 
$$

\begin{figure*}[ht]
    \centering
    \includegraphics[width = 0.8\textwidth]{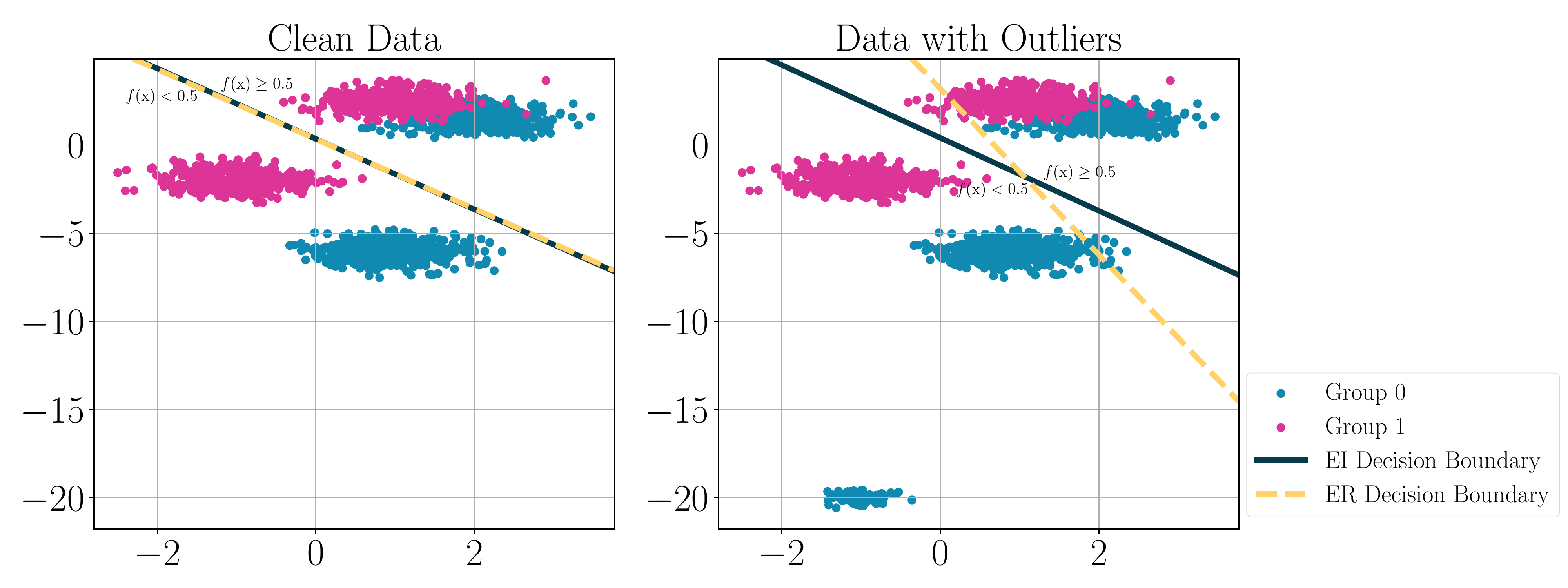}
    \caption{
    \textbf{Visualizations of the EI and ER decision boundaries without and with the presence of outliers. }
    We observe that the decision boundary of ER changes a lot in the presence of outliers, while the decision boundary of EI is not affected. 
    This phenomenon implies the robustness of EI to outliers. 
    }
    \label{fig:ei_er_1}
\end{figure*}

\paragraph{Results}
The decision boundaries of EI and ER for both the clean dataset and contaminated dataset are depicted in Fig.~\ref{fig:ei_er_1}. 
These decision boundaries are the optimal linear decision boundaries based on the distributional information, we followed the same procedure as we mentioned in~\ref{appendix:optimal tradeoff}.
The $\delta$ for the EI classifier is picked as 1.5.
We observe that the introduction of outliers makes the ER decision boundary rotate a lot, leading to a drop in classification accuracy and ER disparity \wrt the clean data distribution. 
Moreover, we note that the newly added outliers fail to destroy EI classifier, which implies the robustness of EI to outliers.
The Table~\ref{results-table:ei_er_1} also presents the accuracy, EI, and ER disparity of each classifier.
These metrics are computed after excluding the outliers.
It shows that the EI and ER disparities have higher values under the dataset with the outliers for the ER classifier.

\begin{table*}[ht]
\caption{\textbf{Error rate and EI and ER disparities of linear EI and ER classifiers.}
}
\label{results-table:ei_er_1}
\setlength\tabcolsep{6pt}
\begin{center}
\begin{scriptsize}
\begin{sc}
\begin{tabular}{p{4cm}|l|cc}

\toprule
& & \multicolumn{2}{c}{Methods} \\
Dataset  & Metric & EI Classifier & ER Classifier\\
\midrule
Dataset without Outliers & 
\begin{tabular}{@{}c@{}}Error Rate($\downarrow$)\\EI Disp.($\downarrow$)\\ER Disp.($\downarrow$)\end{tabular} & 
\begin{tabular}{@{}c@{}}$.001\pm.001$\\$.001\pm.001$\\$.004\pm.001$\end{tabular} & 
\begin{tabular}{@{}c@{}}$.001\pm.001$\\$.005\pm.001$\\$.001\pm.001$\end{tabular}  \\
\midrule
Dataset with Outliers & 
\begin{tabular}{@{}c@{}}Error Rate($\downarrow$)\\EI Disp.($\downarrow$)\\ER Disp.($\downarrow$)\end{tabular} & 
\begin{tabular}{@{}c@{}}$.001\pm.001$\\$.017\pm.001$\\$.020\pm.001$\end{tabular} & 
\begin{tabular}{@{}c@{}}$.020\pm.001$\\$.348\pm.001$\\$.291\pm.001$\end{tabular} \\
\bottomrule
\end{tabular}
\end{sc}
\end{scriptsize}
\end{center}
\end{table*}

\subsubsection{EI versus BE: robustness to imbalanced group negative rate}\label{app:imbalanced_rate}
In this experiment, we investigate the robustness of EI and BE to an imbalanced negative rate.

\begin{figure*}[ht]
    \centering
    \includegraphics[width = 0.8\textwidth]{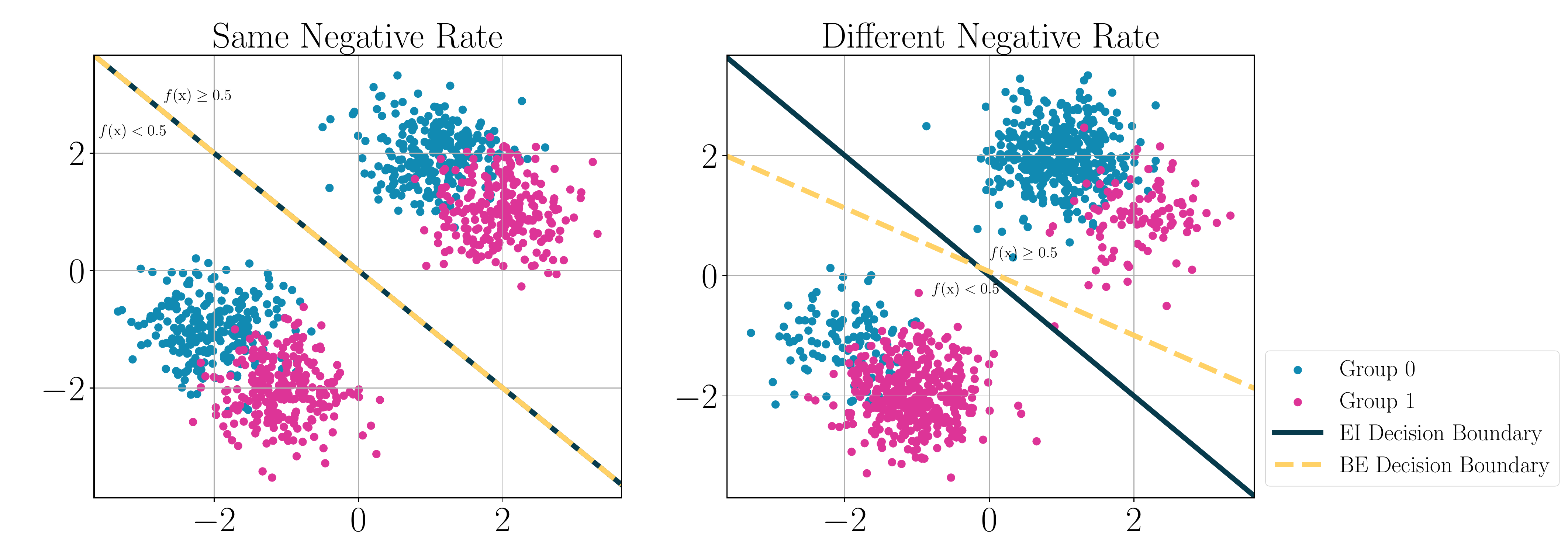}
    \caption{
    \textbf{Visualizations of the EI and BE decision boundaries given the data distribution with the same negative rates and different negative rates.}
    The decision boundary of BE rotates a lot when the negative rate of the dataset becomes different, implying the sensitivity of BE to imbalanced negative rates.  
    In contrast, the consistency of EI decision boundaries showcases the robustness of EI \wrt imbalanced negative rates. 
    }
    \label{fig:ei_be_1}
\end{figure*}

\paragraph{Data Distributions} \textbf{(Same Negative Rate)}
We consider the distribution with balanced subgroups. In other words, let sensitive attribute $\ervz \sim \operatorname{Bern} (0.5)$, and label $\ervy \sim \operatorname{Bern} (0.5)$ be independent of sensitive attribute $\rz$.
Given the sensitive attribute $\rz$ and label $\ry$, feature, feature $\rvx$ follows the Gaussian distribution $(\rvx \mid \ervy = y, \ervz = z) \sim \gN (\bm\mu_{y,z}, \bm\Sigma_{y,z})$ where the mean of each cluster is $$\bm\mu_{0,0} = [-2, -1], \bm\mu_{0,1} = [-1, -2], \bm\mu_{1,0} = [1, 2], \bm\mu_{1,1} = [2, 1]$$ and the covariance matrix of each cluster is $$\bm\Sigma_{0,0} =
\bm\Sigma_{1,0} = 
\bm\Sigma_{0,1} =
\bm\Sigma_{1,1} = \begin{bmatrix}
0.25 & 0.0\\
0.0 & 0.25
\end{bmatrix}.$$
\textbf{(Different Negative Rate)}
We manipulate the distribution of label $\ry$ for constructing data distribution with different negative rates. 
To be more specific, we let $\ervy \mid \ervz= 0  \sim \operatorname{Bern} (0.7)$, and  $\ervy \mid \ervz = 1 \sim \operatorname{Bern} (0.3)$.

\paragraph{Results}
Figure~\ref{fig:ei_be_1} shows the decision boundaries of EI and BE when (i) the dataset has the same negative rate, and (ii) the dataset has a different negative rate.
These decision boundaries are the optimal linear decision boundaries based on the distributional information, we followed the same procedure as we mentioned in~\ref{appendix:optimal tradeoff}.
The $\delta$ for the EI and BE classifiers is picked as 1.5.
The huge difference in BE decision boundaries under the two cases verifies our claim that BE cannot handle imbalanced negative prediction rates. 
In contrast, the decision boundaries of EI learned with two datasets with different group proportions are consistent. 

\subsection{Dynamic modeling}\label{app:justification_dynamic}
Here we justify the dynamic model we used in our experiments in Sec.~\ref{sec:exp:dynamics}, for updating individual's features and for updating the classifier.  
Specifically, Sec.~\ref{sec:justify_dyn_eps} explains the choice of $\epsilon(x)$, the amount of effort a sample (having feature $x$), while Sec.~\ref{sec:justify_dyn_cls} explains the choice of $(\tau_t^{(0)}, \tau_t^{(1)})$ defining the classifier at round $t$.

\subsubsection{Updating individual's features (choice of \texorpdfstring{$\epsilon(x)$)}{ε(x)}}\label{sec:justify_dyn_eps}

At first, we designed a model by assuming that each individual makes an effort, where the amount of the effort he/she makes is (1) proportional to the reward (the improvement on the outcome) it will get by making such effort, and (2) inversely proportional to the required amount of efforts to improve the outcome, as shown in the below equation:     
    \begin{align}
    \text{Realized effort $\epsilon(x)$} &\triangleq \frac{\text{improvement on the outcome}}{\text{required efforts for improving the outcome}^2} \\ &= \frac{ \mathbf{1}\{x < \tau_t^{(z)}\} }{(\tau_t^{(z)} - x)^2},
    \end{align}
where $\tau_t^{(z)}$ is the accepting threshold for the individuals from group $z$ at round $t$, and $x$ is the feature. This equation can be interpreted as follows: an individual that is unqualified $(x < \tau_t^{(z)})$ is willing to make positive effort, but he/she is less motivated if the distance $\tau_t^{(z)} - x$ to the decision boundary is too large. 
        
In order to upper bound the amount of reward, we added a small constant $\beta > 0$ in the numerator, thus having the final form:
\begin{equation}
    \epsilon(x) = \frac{\mathbf{1}\{x < \tau_t^{(z)}\}}{(\tau_t^{(z)}-x+\beta)^2}.
\end{equation}

\subsubsection{Updating the classifier}\label{sec:justify_dyn_cls}
Note that at each round $t$, each individual's features are updated. Accordingly, we also update the classifier (having parameters $\tau_t^{(0)}$ and $\tau_t^{(1)}$) in the same round. We update the EI classifier based on the following constrained optimization problem: 
\begin{equation}
    \min \text{(EI Disparity)} \quad \text{s.t.}\quad \text{error rate} \leq \text{threshold},
\end{equation}
    which implies that we aim to find a classifier that guarantees good accuracy and small EI disparity. This optimization can be rewritten as 
    \begin{align}
    & \min_{\tau_t^{(0)}, \tau_t^{(1)}}\overbrace{\left| \mathbb{P}(\underbrace{\mathrm{x}_t + \delta_t > \tau_t^{(0)}}_{\text{the outcome can be improved}} \mid \mathrm{z} = 0, \underbrace{\mathrm{x}_t <\tau_t^{(0)}}_{\text{rejected}})  - \mathbb{P}(\mathrm{x}_t + \delta_t > \tau_t^{(1)} \mid \mathrm{z} = 1, \mathrm{x}_t <\tau_t^{(1)})\right|}^{\text{EI Disparity}} \\ & \quad \text{s.t.}~\overbrace{\mathbb{P}(\hat{\mathrm{y}}_t \neq \mathrm{y}_t)}^{\text{error rate}}\leq c. 
    \end{align}

Given the data distribution at each round $t$, we numerically solve the constrained optimization problems using a popular python module \texttt{scipy.optimize}.

\subsection{Obtaining baseline classifiers for dynamic scenarios}\label{sec:dyn}
Continued from Sec.~\ref{sec:exp:dynamics}, this section describes how we compute the classifiers that satisfy demographic parity (DP), bounded effort (BE)~\citep{heidari2019longterm} equal recourse (ER)~\citep{gupta2019equalizing} and individual-level fairness causal recourse (ILFCR)~\citep{von2022fairness}, respectively.
Similar to EI classifier, we obtain the best DP classifier by considering the following constrained optimization problem:
\begin{equation}\label{opt:dp}
    \min \left|\sP(\haty_t = 1\mid \ervz = 0) - \sP(\haty_t = 1 \mid \ervz = 1)\right| \quad \text{s.t.} ~\sP (\haty_t \neq \ervy_t) \leq c.
\end{equation}
The best BE classifier can be obtained by solving the following problem:
\begin{equation}\label{opt:be}
    \min \left|\sP(\tau^{(0)}_t-\delta_t<\ervx_t<\tau^{(0)}_t \mid \ervz = 0) -\sP(\tau^{(1)}_t-\delta_t<\ervx_t<\tau^{(1)}_t \mid \ervz = 1)\right| \quad \text{s.t.}~\sP (\haty_t \neq \ervy_t) \leq c.
\end{equation}
Similarly, the optimization problem for obtaining the best ER classifier is written as:
\begin{equation}\label{opt:er}
    \min \left|\mathbb{E}\left[\tau^{(0)}_t-\ervx_t\mid \ervx_t<\tau^{(0)}_t,\ervz=0\right]-\mathbb{E}\left[\tau^{(1)}_t-\ervx_t\mid \ervx_t<\tau^{(1)}_t,\ervz=1\right]\right| \quad \text{s.t.}~\sP (\haty_t \neq \ervy_t) \leq c.
\end{equation}

While the computation of ILFCR classifier is less straightforward, we first describe the setting considered in \citet{von2022fairness}. 
This paper assumes that each observed variable $x_i$ is determined by (i) its direct causes (causal parents) which include the sensitive attribute $z$ and other observed variables $x_j$, and (ii) an unobserved variable $u_i$. 
Since our dynamics experiment considers only one-dimensional variable $x$, it is determined by the sensitive attribute $z$ and latent variable $u$. 
Therefore, we write $x$ as a function of $z$ and $u$, i.e., $x(z,u).$ 

Before we describe how we compute ILFCR in our experiments, we introduce the definition of causal recourse and ILFCR in more detail. 
Given a model and a sample with feature $x$, the causal recourse $C(x)$ of the sample is the minimum cost of changing the features, in order to alter the decision of the model.
Let $\tau^{(z)} \in \mathbb{R}$ be the acceptance threshold of group $z\in\{0,1\}$ and $\mu(\Delta x) = |\Delta x|$ be the cost of improving the feature by $\Delta x$. Let $x'$ denote the improved feature. In our one-dimensional dynamic setting, the causal recourse is defined as 
\begin{equation}
    C(x(z,u)) = \min_{x' \geq \tau^{(z)}} \mu(x'-x(z,u)) = \max(\tau^{(z)} - x(z, u), 0),
\end{equation}
for $z=0,1$. ILFCR requires different groups have the same causal recourse for all realizations of the latent variable $u$:
\begin{equation}
    \max_{u} |C(x(0,u))  - C(x(1,u)) | = 0.
\end{equation}

Now we describe how we compute ILFCR disparity. Recall that for dynamic experiments, we assumed the feature $x$ follows Gaussian distribution for each group $z \in \{0,1\}$, i.e., $x \mid z \sim \mathcal{N}(\mu_z, \sigma_z^2)$. This can be represented as a notation used in \citet{von2022fairness}: the latent variable is $u \sim \mathcal{N}(0,1)$ and the feature is represented as $x(z,u) = \mu_z + \sigma_z u$. 
We measure the ILFCR disparity of a decision boundary pair ($\tau_0, \tau_1$)  by 
\begin{align}
\text{ILFCR Disparity}(\tau^{(0)}, \tau^{(1)}) &= \max_{u}|C (x(0,u)) - C (x(1,u))|\\
& = \max_{u}\left|\max\left(\tau^{(0)} - x(0,u), 0\right) - \max\left(\tau^{(1)} - x(1,u), 0\right)\right|\\
& = \max_{u}\underbrace{\left|\max\left(\tau^{(0)} - \mu_0-\sigma_0 u, 0\right) - \max\left(\tau^{(1)} - \mu_1-\sigma_1 u, 0\right)\right|}_{ = (\tau^{(0)}-\tau^{(1)} -\mu_0 +\mu_1-(\sigma_0-\sigma_1)u) \to \infty \text{ when }u \to -\infty}.
\end{align}

Note that the ILFCR disparity is infinity if we take the maximum over all $u$. 
Therefore, we instead ignore the tail distribution and focus on the samples with $x \in [\mu_z - 3\sigma_z, \mu_z + 3\sigma_z]$ for group $z = 0,1.$ In order to find a classifier with small ILFCR disparity, we numerically solve the following constrained optimization problem 
\begin{equation}
    \min\limits_{\tau^{(0)}, \tau^{(1)}} \quad  \text{ILFCR Disparity}(\tau^{(0)}, \tau^{(1)})\quad \text{s.t.}\quad \text{error rate} \leq \alpha/2.
\end{equation}

Given the data distribution at each round $t$, we numerically solve all the constrained optimization problems above using a popular python module \texttt{scipy.optimize}.

\subsection{Performance comparison between fairness notions under other dymanic models}\label{app:dynamics2}

Here we provide one example in which EI improves long-term fairness, while some other methods (EO, BE, and ER) harm long-term fairness. 

\begin{wrapfigure}{r}{0.4\textwidth}
    \vspace{-.4in}
    \centering
    \includegraphics[width = 0.4\textwidth]{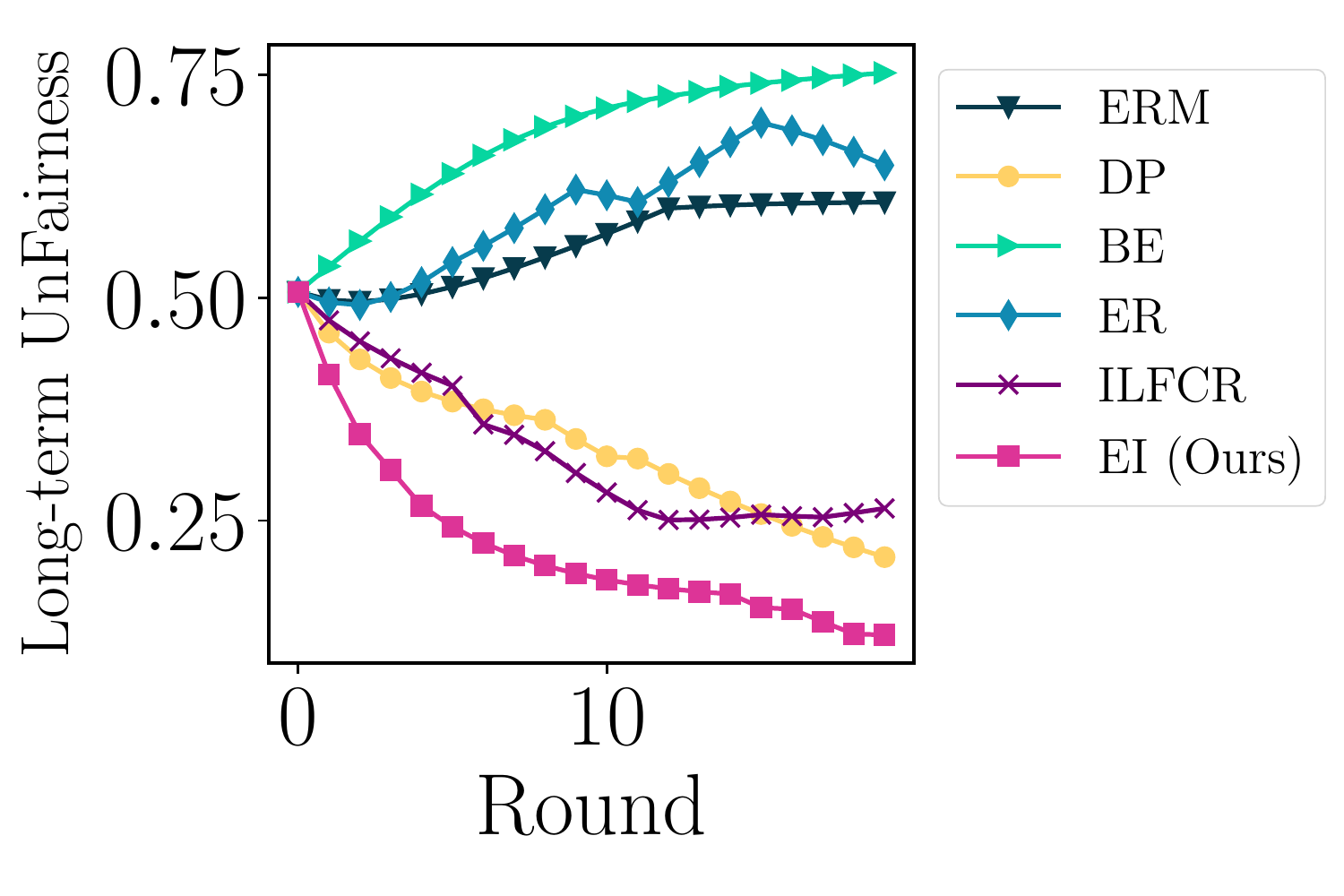}
    \vspace{-.4in}
    \caption{
    Long-term unfairness $d_{TV}(\gP_t^{(0)}, \gP_t^{(1)})$ at each round $t$ for various algorithms in the setup described in Sec.~\ref{app:dynamics2}. 
    }
    \vspace{-.5in}
    \label{fig:dynamic_dist2}
\end{wrapfigure}
 
In this example, we modified the effort function from $\epsilon(x) = \frac{1}{(\tau_t^{(z)} - x +\beta)^2}\mathbf{1}\{x<\tau_t^{(z)}\}$ to $\epsilon(x) = \log\left(\max(\frac{1}{(\tau_t^{(z)} - x +\beta)^2},1)\right)\mathbf{1}\{x<\tau_t^{(z)}\}$ so that the rejected individuals make less improvement than the setting in the original manuscript. Here $\tau_t^{(z)}$ is the acceptance threshold for group $z$ at iteration $t$, and $\beta>0$ is a small constant to avoid zero denominators. We set $\beta = 0.2$ and the true acceptance rate $\alpha = 0.5$. Figure~\ref{fig:dynamic_dist2} reports how long-term unfairness changes when the initial distribution is $x \mid z=0 \sim \mathcal{N}(0.0, 3.0^2)$ for group 0 and $x \mid z=1 \sim \mathcal{N}(1.0, 1.0^2)$ for group 1.

\section{Examples: Investigating the One-Step Impact of Fairness Notions via Mathematical Analysis}\label{sec:theory_long_term_fairness}

In this section, we conduct mathematical analysis on two examples to better understand how EI improves long-term fairness compared to existing fairness notions. 
We first introduce a basic setup for the mathematical analysis and then apply it to two examples to compare EI with BE, ERM, and ER in terms of their one-step impact on data distributions.

\subsection{Basic setup and preliminaries}
\subsubsection{Classifiers}
We consider the $z-$aware classifier, having 
\begin{align}
f(x) = 
\begin{cases}
\mathbf{1}\{x \ge \tau_{0}\}, & z=0, \\
\mathbf{1}\{x \ge \tau_{1}\}, & z=1,
\end{cases}
\end{align}
which is parameterized by the threshold pair $(\tau_0, \tau_1)$. We assume the effort budget is $\delta = \frac{m}{2}$ where $m$ is defined in the dataset.

Recall the zero equal improvability (EI) disparity condition is:
{\footnotesize \begin{align}
\mathbb{P}\left(\max\limits_{\mu(\Delta x) \leq \delta} f(x + \Delta x) \ge  0.5 \mid f(x)  <   0.5, z=0 \right)  =  \mathbb{P}\left(\max\limits_{\mu(\Delta x) \leq \delta} f(x + \Delta x)  \ge  0.5 \mid f(x) < 0.5, z=1 \right)
\end{align}
}
where $\mu(\Delta x) = \lvert  \Delta x \rvert$. This condition is equivalent to 
\begin{align}\label{eqn:EI_cond}
    \frac{\int_{\tau_0 - \delta}^{\tau_0} p_0(x) dx}{\int_{-\infty}^{\tau_0} p_0(x) dx} = 
    \frac{\int_{\tau_1 - \delta}^{\tau_1} p_1(x) dx}{\int_{-\infty}^{\tau_1} p_1(x) dx}.
\end{align}
We denote the improvability ratio of each group as
\begin{align}
    r_0(\tau_0) &= \frac{\int_{\tau_0 - \delta}^{\tau_0} p_0(x) dx}{\int_{-\infty}^{\tau_0} p_0(x) dx}, \label{eqn:imp_ratio_0} \\
    r_1(\tau_1) &= \frac{\int_{\tau_1 - \delta}^{\tau_1} p_1(x) dx}{\int_{-\infty}^{\tau_1} p_1(x) dx} \label{eqn:imp_ratio_1}
\end{align}
The classifier that satisfies this zero EI disparity condition and minimizes the error rate  is denoted as the optimal EI classifier.

Recall that the bounded effort (BE) fairness constraint is:
{\footnotesize
\begin{align}\label{eqn:BE}
\mathbb{P}\left(\max\limits_{\mu(\Delta x) \leq \delta} f(x + \Delta x) \ge  0.5, f(x)  <   0.5 \mid z=0 \right)  =  \mathbb{P}\left(\max\limits_{\mu(\Delta x) \leq \delta} f(x + \Delta x)  \ge  0.5, f(x) < 0.5 \mid z=1 \right)
\end{align}
}
where $\mu(\Delta x) = \lvert  \Delta x \rvert$. Meanwhile, the Equal Recourse (ER) constraint is defined as 
\begin{equation}\label{eqn:ER}
    \E\left[ \min_{f(x + \Delta x) \geq 0.5} \mu(\Delta x) \mid f(x) < 0.5, z = 0 \right] = \E\left[ \min_{f(x + \Delta x) \geq 0.5} \mu(\Delta x) \mid f(x) < 0.5, z = 1 \right]
\end{equation}

\subsubsection{Dynamic scenario}

Suppose each rejected sample improves its feature by 
\begin{align}\label{eqn:feat_imp}
 \eps(x) =
 \begin{cases}
 \delta \cdot \mathbf{1}\{x \in [\tau_0 - \delta, \tau_0)\}, & \text{ if } z=0, \\
 \delta \cdot \mathbf{1}\{x \in [\tau_1 - \delta, \tau_1)\}, & \text{ if } z=1
 \end{cases}
\end{align}
Note that depending on the classifier we are using, the threshold pair ($\tau_0, \tau_1$) changes, thus the formulation for $\eps(x)$ also changes. Here, we use $\eps^{\op{ERM}}(x)$ to denote the improvement of features given ERM classifier, and similarly define $\eps^{\op{EI}}(x), \eps^{\op{BE}}(x)$ and $\eps^{\op{ER}}(x)$.

Let $p_z^{\op{ERM}}(x) = p_z(x + \eps^{\op{ERM}}(x))$ for $z \in \{0,1\}$, which represent the data distribution after the features are improved based on ERM classifier.
Similarly, we define $p_z^{\op{EI}}(x)$, $p_z^{\op{BE}}(x)$ and $p_z^{\op{ER}}(x)$ for EI/BE/ER classifiers, respectively.
In the upcoming sections, we measure the total-variation (TV) distance 
\begin{align}\label{eqn:wass_dist}
    d_{TV} (p_0, p_1) = \frac{1}{2}\int_{\sR} \lvert p_0(x) - p_1(x) \rvert dx
\end{align}
between two groups after a single step of feature improvement, and provide how this measurement differs for various classifiers.

\subsection{EI versus BE \& ERM}

\subsubsection{Finding the optimal classifier for each fairness criterion}

\paragraph{Setup}
Let $p_z(x)$ be the data distribution of each group $z \in \{0,1\}$, shown in Fig.~\ref{fig:dataset_BE}. We consider the case of each sample having one feature $x$, and the label is assigned as 
\begin{align}
y = 
\begin{cases}
\mathbf{1}\{x \ge m/2\}, & \text{ if } z=0 \\
\mathbf{1}\{x \ge 0\}, & \text{ if } z=1
\end{cases}
\end{align}
Note that we have 
$\sP(y=0 | z=0) = \frac{3}{4}$, 
$\sP(y=1 | z=0) = \frac{1}{4}$, 
$\sP(y=0 | z=1) = \frac{1}{2}$, and 
$\sP(y=1 | z=1) = \frac{1}{2}$. We set $\sP(z=0) = \frac{1}{4}$ and $\sP(z=1) = \frac{3}{4}$.

\begin{figure}[ht]
    \centering
    \includegraphics[width=6cm]{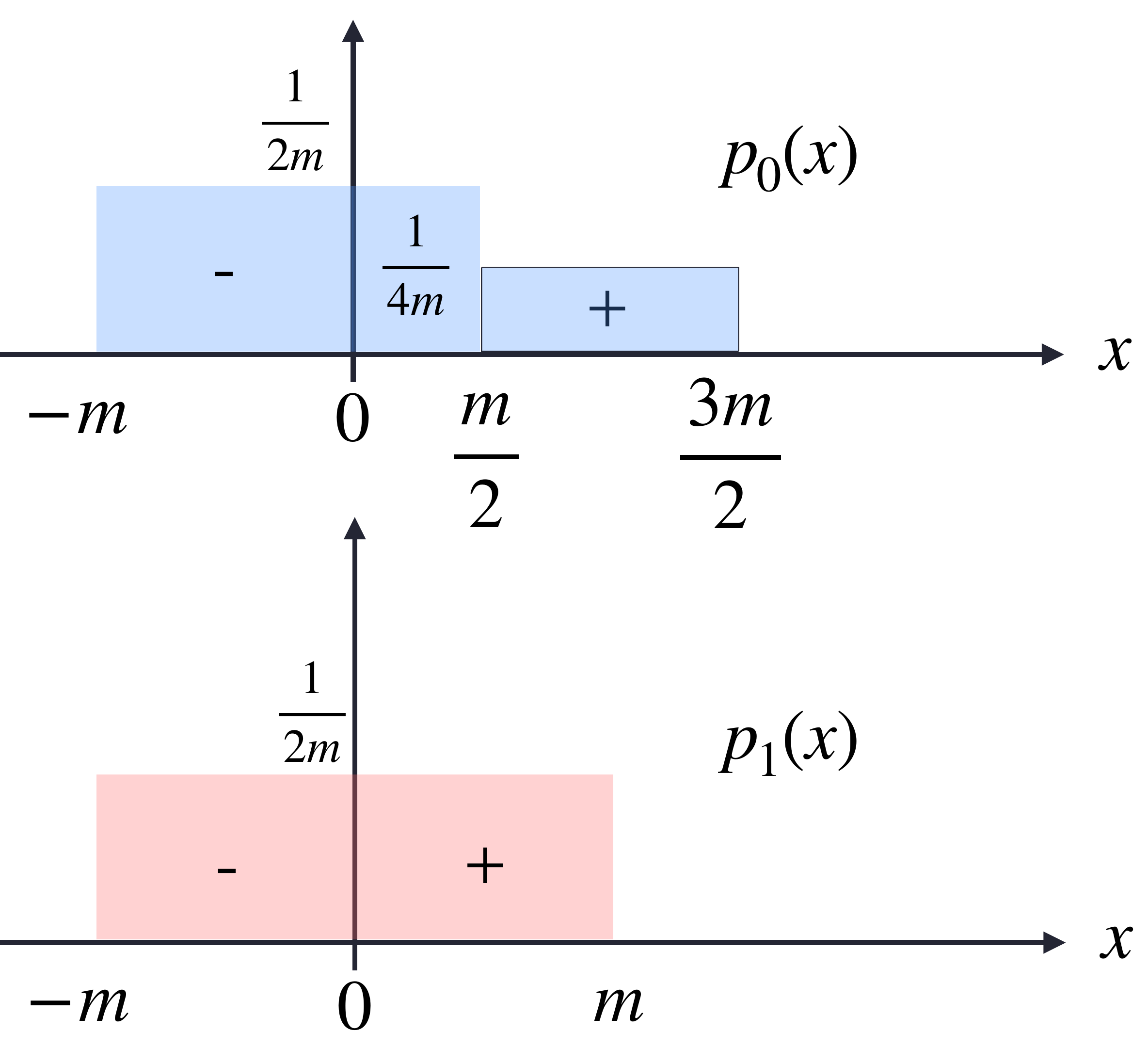}
    \caption{The data distribution $p_z(x)$ for each group $z \in \{0,1\}$. Samples with $(+)$ sign have the true label $y=1$, while samples with $(-)$ sign have the true label $y=0$.}
    \label{fig:dataset_BE}
\end{figure}

\paragraph{ERM classifier} 
We have
\begin{align}\label{eqn:tau_ERM}
(\tau_0^{\op{ERM}}, \tau_1^{\op{ERM}}) = \left(\frac{m}{2}, 0\right)
\end{align}
since this threshold pair has zero classification error.

\paragraph{BE classifier} 
We have
\begin{align}\label{eqn:tau_BE}
(\tau_0^{\op{BE}}, \tau_1^{\op{BE}}) = \left(\frac{m}{2}, 0\right)
\end{align}
since this threshold pair has zero classification error and satisfy the BE condition in~\eqref{eqn:BE}.

\paragraph{EI classifier}

The optimal EI classifier for dataset in Fig.~\ref{fig:dataset_BE} is 
\begin{align}\label{eqn:tau_EI}
(\tau_0^{\op{EI}}, \tau_1^{\op{EI}}) = (0, 0).
\end{align} 
\begin{proof}
Note that $(\tau_0^{\op{EI}}, \tau_1^{\op{EI}}) = (0, 0)$ has error rate  $\sP(\text{error}) = \sP(z=0)\sP(\text{error} | z=0) + \sP(z=1)\sP(\text{error} | z=1) = \frac{1}{4} \cdot \frac{1}{4} + \frac{3}{4} \cdot 0 = \frac{1}{16}$. 
We prove that no other classifier satisfing EI condition in~\eqref{eqn:EI_cond} is having error rate less than $\frac{1}{16}$.
Note that when $\lvert \tau_1 \rvert > \frac{m}{6}$, the error rate is larger than $\frac{1}{16}$. Thus it is sufficient to consider cases when $\lvert \tau_1 \rvert \le \frac{m}{6}$.
Combining this with the fact that 
\begin{itemize}
    \item $r_0(\tau_0)$ in (\ref{eqn:imp_ratio_0}) and $r_1(\tau_1)$ in (\ref{eqn:imp_ratio_1}) are monotonically decreasing,
    \item EI condition in (\ref{eqn:EI_cond}) is satisfied when $r_0(\tau_0) = r_1(\tau_1)$ holds,
    \item $r_0(\tau_0) = r_1(\tau_1)$ holds for $\tau_0 = \tau_1 \le \frac{m}{2}$,
\end{itemize}
we can see that the optimal EI classifier satisfies $\tau_0 = \tau_1$ and $\lvert \tau_1 \rvert \le \frac{m}{6}$. The error rate for these classifiers is represented as $\sP(\text{error}) = \sP(z=0)\sP(\text{error} | z=0) + \sP(z=1)\sP(\text{error} | z=1) = \frac{1}{4} \cdot (\frac{m}{2} - \tau_0) \cdot \frac{1}{2m}  + \frac{3}{4} \cdot \lvert \tau_1 \rvert \cdot \frac{1}{2m}$. Plugging in $\tau_0 = \tau_1$ and optimizing the error probability over $\lvert \tau_1 \rvert \le \frac{m}{6}$ completes the proof.
\end{proof}

\subsubsection{Total-variation distance between two groups}

For the dataset given in Fig.~\ref{fig:dataset_BE}, the total-variation distance between two groups for each classifier is:
\begin{align}
d_{TV}(p_0^{\op{ERM}}, p_1^{\op{ERM}}) &= 0.5, \\
d_{TV}(p_0^{\op{BE}}, p_1^{\op{BE}}) &= 0.5, \\
d_{TV}(p_0^{\op{EI}}, p_1^{\op{EI}}) &= 0.125. 
\end{align}
\begin{proof}
Since ERM solution is identical to BE solution, proving the above equation for BE and EI is sufficient.
Recall that the expression of BE/EI classifiers are in~\eqref{eqn:tau_BE} and~\eqref{eqn:tau_EI}.
Using this expression, we can derive the distribution of each group:
\begin{align}
p_0^{\op{BE}}(x) &= 
\begin{cases}
\frac{3}{4m}, & \text{ if } x \in [\frac{m}{2}, m]  \\
\frac{1}{2m}, & \text{ if } x \in [-m, 0]  \\
\frac{1}{4m}, & \text{ if } x \in  [m,\frac{3m}{2}] \\
0, & \text{ if } x \in [0, \frac{m}{2}] \text{ or } x \notin [-m,\frac{3m}{2}]
\end{cases}, \\
p_1^{\op{BE}}(x) &= 
\begin{cases}
\frac{1}{m}, & \text{ if } x \in [0,\frac{m}{2}] \\
\frac{1}{2m}, & \text{ if } x \in [-m, -\frac{m}{2}] \cup [\frac{m}{2},m] \\
0, & \text{ if } x \in [-\frac{m}{2}, 0] \text{ or } x \notin [-m,m]
\end{cases}, \\
p_0^{\op{EI}}(x) &= 
\begin{cases}
\frac{1}{m}, & \text{ if } x \in [0,\frac{m}{2}] \\
\frac{1}{2m}, & \text{ if } x \in [-m, -\frac{m}{2}]  \\
\frac{1}{4m}, & \text{ if } x \in  [\frac{m}{2},\frac{3m}{2}] \\
0, & \text{ if } x \in [-\frac{m}{2}, 0] \text{ or } x \notin [-m,\frac{3m}{2}]
\end{cases}, \\
p_1^{\op{EI}}(x) &= p_1^{\op{BE}}(x) \quad \forall x
\end{align}

From this expression, we can derive the total-variation distance, which completes the proof.
\end{proof}

\subsection{EI versus ER}
\subsubsection{Finding the optimal classifier for each fairness criterion}
\paragraph{Setup}
Let $p_z(x)$ be the data distribution of each group $z\in \{0,1\}$, show in Fig.~\ref{fig:dataset_ER}.
We consider the case of each sample having one feature $x$, and the label is assigned as 
\begin{align}
y = 
\begin{cases}
\mathbf{1}\{x \ge 0\}, & \text{ if } z=0 \\
\mathbf{1}\{x \ge 0\}, & \text{ if } z=1
\end{cases}.
\end{align}
Note that we have $\sP(y=0 \mid z=0) = \sP(y=0\mid z=1) = \sP(y=1\mid z=0) = \sP(y=1\mid z=1) = \frac{1}{2}$. We set $\sP(z=0) = \sP (z = 1) = \frac{1}{2}$. 
\begin{figure}[ht]
    \centering
    \includegraphics[width=9cm]{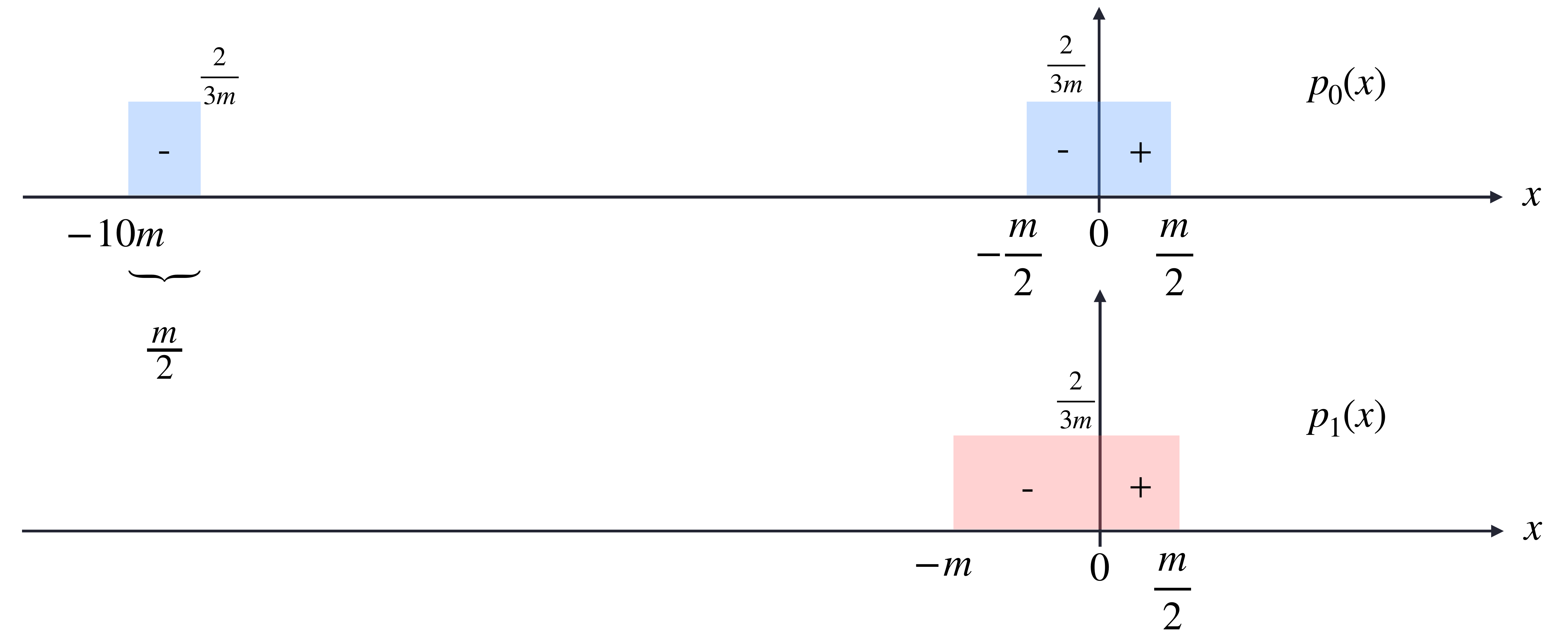}
    \caption{The data distribution $p_z(x)$ for each group $z \in \{0,1\}$. Samples with $(+)$ sign have the true label $y=1$, while samples with $(-)$ sign have the true label $y=0$.}
    \label{fig:dataset_ER}
\end{figure}
\paragraph{ER classifier} 
The optimal ER classifier for the dataset in Fig.~\ref{fig:dataset_ER} is 
\begin{equation}\label{eq:er_exp}
    (\tau_0^{\op{ER}}, \tau_1^{\op{ER}}) = \left(-9m ,0\right).
\end{equation}
\begin{proof}
    Note that the accepting thresholds $(-9m ,0)$ has error rate $\sP(\text{error}) = \sP(\text{error} \mid z = 0)\sP(z=0) + \sP(\text{error} \mid z = 1)\sP(z=1) = \frac{1}{3}\cdot \frac{1}{2} + 0\cdot \frac{1}{2} = \frac{1}{6}$. We prove that no other classifier satisfying ER constraint \eqref{eqn:ER} is having error rate less than $\frac{1}{6}$. 
    
    The necessary condition for the classifier to have an error rate less than $\frac{1}{6}$ is $\tau_0, \tau_1 \in (-\frac{m}{2}, \frac{m}{2})$.
    Note that when $\tau_0 \in (-\frac{m}{2}, \frac{m}{2})$, we have the recourse of the group 0: 
    \begin{align}
        & \E\left[ \min_{f(x + \Delta x) \geq 0.5} \mu(\Delta x) \mid f(x) < 0.5, z = 0 \right] \\ 
        & = \E\left[\tau_0 - x \mid x < \tau_0, z = 0 \right] \\
        & = \tau_0 -  \underbrace{\E\left[ x \mid x < \tau_0, z = 0 \right]}_{\varphi(\tau_0)}, \label{eq:er_4}
    \end{align}
    where
    \begin{align}\label{eq:er_2}
        \varphi(\tau_0) & = \E\left[ x \mid x < \tau_0, z = 0 \right] \\ \label{eq:er_3}
        & = \underbrace{\sP\left(x \in \left[-10m,-\frac{19}{2}m\right] \mid x < \tau_0, z = 0 \right)}_{\overset{\Delta}{=} \lambda}\E\left[ x \mid x \in \left[-10m,-\frac{19}{2}m\right], z = 0 \right] \\
        & \quad \quad \quad \quad + \underbrace{\sP\left(x \in \left[-\frac{m}{2},\tau_0\right]\mid x < \tau_0, z = 0 \right)}_{1- \lambda } \E\left[ x \mid x \in \left[-\frac{m}{2},\tau_0\right], z = 0 \right],\\
    \end{align}
    for all $\tau_0 \in [-\frac{m}{2}, \frac{m}{2}]$, where $\lambda \in [\frac{1}{3}, 1]$. 
    Since $\E\left[ x \mid x \in \left[-10m,-\frac{19}{2}m\right], z = 0 \right] < \E\left[ x \mid x \in \left[-\frac{m}{2},\tau_0\right], z = 0 \right]$, \eqref{eq:er_2} is a decreasing function.
    Consequently,
    \begin{align}
        & \leq \frac{1}{3}\E\left[ x \mid x \in \left[-10m,-\frac{19}{2}m\right], z = 0 \right] + \frac{2}{3}\E\left[ x \mid x \in \left[-\frac{m}{2},\tau_0\right], z = 0 \right] \\
        & = -\frac{39}{12}m  + \frac{\tau_0}{3} - \frac{m}{6} = \frac{\tau_0}{3} - \frac{41}{12}m.
    \end{align}
    Thus, the recourse of group 0 is 
    \begin{equation}\label{eq:er_5}
        \eqref{eq:er_4} \geq \frac{2\tau_0}{3} + \frac{41}{12}m \geq \frac{37}{12}m
    \end{equation}

    Meanwhile, when $\tau_1 \in (-\frac{m}{2}, \frac{m}{2})$, we have the recourse of the group 1: 
    \begin{align}
        & \E\left[ \min_{f(x + \Delta x) \geq 0.5} \mu(\Delta x) \mid f(x) < 0.5, z = 1 \right] \\ 
        & = \E\left[\tau_1 - x \mid x < \tau_1, z = 0 \right] \\
        & = \tau_1 - \E\left[ x \mid x < \tau_1, z = 0 \right] \\ 
        & = \tau_1 - \frac{\tau_1 - m}{2} \\
        & = \frac{\tau_1 + m}{2} \in \left(\frac{m}{4}, \frac{3}{4}m\right), \label{eq:er_6}
    \end{align}
    Therefore, combining \eqref{eq:er_5} and \eqref{eq:er_6} implies that when $\tau_0, \tau_1 \in (-\frac{m}{2}, \frac{m}{2})$, the ER constraint \eqref{eqn:ER} cannot be satisfied. 
    Thus, the  way for achieving error rate $<\frac{1}{6}$ is $\tau_0 < -\frac{m}{2}$ and $\tau_1 = 0$.
    Note that the recourse of group 0 which is written in \eqref{eq:er_4} is a strictly increasing function when $\tau_0 < -\frac{m}{2}$. 
    Therefore, there exists a unique classifier that achieves both EI while maintaining error rate $< \frac{1}{6}$. 
    One can easily verify that $(-9m,0)$ is the optimal ER classifier. 
\end{proof}

\paragraph{EI classifier}
We have 
\begin{equation}
    (\tau_0^{\op{EI}}, \tau_1^{\op{EI}}) = (0,0) \label{eq:ei_exp}
\end{equation}
since this threshold pair has zero classification error and satisfies the EI constraint \eqref{eqn:EI_cond}.  

\subsubsection{Total-variation distance between two groups}
For the dataset given in Fig.~\ref{fig:dataset_ER}, the total variation distance between two groups for EI and ER classifier is 
    \begin{align}
    d_{TV}(p_0^{\op{ER}}, p_1^{\op{ER}}) &= \frac{2}{3}, \\
    d_{TV}(p_0^{\op{EI}}, p_1^{\op{EI}}) &= \frac{1}{3}. 
    \end{align}

\begin{proof}
    By \eqref{eq:ei_exp} and \eqref{eq:er_exp}, 
  \begin{align}
p_0^{\op{ER}}(x) &= 
\begin{cases}
\frac{2}{3m}, & \text{ if } x \in [-\frac{19m}{2}, -8m]  \\
\frac{2}{3m}, & \text{ if } x \in [-\frac{m}{2},\frac{m}{2}] \\
0, & \text{o.w.}
\end{cases}, \\
p_1^{\op{ER}}(x) &= 
\begin{cases}
\frac{2}{3m}, & \text{ if } x \in [-m, -\frac{m}{2}]  \\
\frac{4}{3m}, & \text{ if } x \in [0, \frac{m}{2}]  \\
0, & \text{o.w.}
\end{cases}, \\
p_0^{\op{EI}}(x) &= 
\begin{cases}
\frac{2}{3m}, & \text{ if } x \in [-10m, -\frac{19m}{2}]  \\
\frac{4}{3m}, & \text{ if } x \in  [0,\frac{m}{2}] \\
0, & \text{o.w.}
\end{cases}, \\
        p_1^{\op{EI}}(x) &= 
\begin{cases}
\frac{2}{3m}, & \text{ if } x \in [-m, -\frac{m}{2}]  \\
\frac{4}{3m}, & \text{ if } x \in [0, \frac{m}{2}]  \\
0, & \text{o.w.}
\end{cases}.\\
\end{align}
For this expression, we can derive the total-variation distance, which completes the proof. 
\end{proof}

\end{document}